\documentclass{article}

\PassOptionsToPackage{numbers, compress}{natbib}

  \usepackage[preprint]{neurips_2026}

\usepackage{comment}
\usepackage[utf8]{inputenc} 
\usepackage[T1]{fontenc}   
\usepackage[colorlinks=true, allcolors=blue]{hyperref}
\usepackage{url}          
\usepackage{booktabs}       
\usepackage{amsfonts}     
\usepackage{nicefrac}      
\usepackage{microtype}     
\usepackage{xcolor}     
\usepackage{amsmath}
\usepackage{amsthm}
\usepackage{tikz}
\usepackage{nicefrac}
\usepackage{multirow}
\usepackage{multicol}
\usetikzlibrary{positioning}
\usetikzlibrary{calc}
\usepackage{enumitem}

\newtheorem{theorem}{Theorem}

\newtheorem{corollary}{Corollary}

\newtheorem*{theorem*}{Theorem}
\newtheorem*{lemma*}{Lemma}
\newtheorem*{proposition*}{Proposition}
\newtheorem*{corollary*}{Corollary}
\newtheorem*{example*}{Example}
\newtheorem{assumption}{Assumption}

\newcommand{\argmin}{\operatorname{argmin}}

\title{Feature Starvation as Geometric Instability\\ in Sparse Autoencoders}

\author{%
  Faris Chaudhry \\
  Imperial College London \\
  South Kensington Campus, London SW7 2AZ, UK\\
  \texttt{faris.chaudhry22@imperial.ac.uk} \\
  \And
  Keisuke Yano \\
  Institute for Statistical Mathematics \\
  10-3 Midori-cho, Tachikawa
Tokyo 190-8562, Japan \\
  \texttt{yano@ism.ac.jp} \\
  \AND
  Anthea Monod \\
  Imperial College London \\
  South Kensington Campus, London SW7 2AZ, UK \\
  \texttt{a.monod@imperial.ac.uk} \\
}

\begin{document}

\maketitle

\begin{abstract}
Sparse autoencoders (SAEs) are used to disentangle the dense, polysemantic internal representations of large language models (LLMs) into interpretable, monosemantic concepts. However, standard $\ell_1$-regularized SAEs suffer from feature starvation (dead neurons) and shrinkage bias, often requiring computationally expensive heuristic resampling and nondifferentiable hard-masking methods to bypass these challenges. We argue that feature starvation is not merely an empirical artifact of poor data diversity, but a fundamental optimization-geometric pathology of overcomplete dictionaries: the $\ell_1$-induced sparse coding map is unstable and fundamentally misaligned with shallow, amortized encoders. To address this structural instability, we introduce adaptive elastic net SAEs (AEN-SAEs), a fully differentiable architecture grounded in classical sparse regression. AEN-SAEs combine an $\ell_2$ structural term that enforces strong convexity and Lipschitz stability with adaptive $\ell_1$ reweighting that eliminates shrinkage bias and suppresses spurious features, thereby jointly controlling the curvature and interaction structure of the induced polyhedral geometry. Theoretically, we show that AEN-SAEs yield a Lipschitz-continuous sparse coding map and recover the global feature support under mild assumptions. Empirically, across synthetic settings and LLMs (Pythia 70M, Llama 3.1 8B), AEN-SAEs mitigate feature starvation without auxiliary heuristics while maintaining competitive reconstruction abilities.
\end{abstract}

\section{Introduction}

Large language models (LLMs) encode complex, polysemantic concepts within their dense residual streams, rendering their internal decision-making processes opaque. Recently, sparse autoencoders (SAEs) have emerged as a primary technique in mechanistic interpretability to linearly disentangle these representations into interpretable, monosemantic features~\citep{elhage2022toymodelssuperposition, bricken2023monosemanticity, cunningham2023sparseautoencoders}. By training an overcomplete dictionary with an $\ell_1$ sparsity penalty, SAEs project dense activations into a higher-dimensional space where individual basis directions correspond to human-understandable concepts.

The specific setup of an SAE intervention is illustrated in Figure~\ref{fig:sae_diagram}, building upon classical autoencoder frameworks for representation learning~\citep{hinton2006reducing}. Given a pretrained, frozen LLM with $F$ layers, input tokens $X_0$ are processed up to a chosen intermediate layer $M$ to extract a dense representation $X_M$. Middle layers are specifically targeted, as empirical evidence suggests this is where the most complex, polysemantic reasoning occurs, while earlier and later layers are heavily biased toward raw lexical processing and logit unembedding, respectively~\citep{geva2022transformers}. The SAE encoder projects $X_M$ into a sparse, high-dimensional latent space $h(X_M)$. The decoder then maps this projection back to the original dimension to produce the reconstructed activation $\hat{X}_M$. By substituting the original activations with these reconstructions $\hat{X}_M$ and measuring downstream recovery, individual basis directions corresponding to human-understandable concepts can be isolated and interpreted.

\begin{figure}[htbp]
    \centering
    \includegraphics[width=0.8\linewidth]{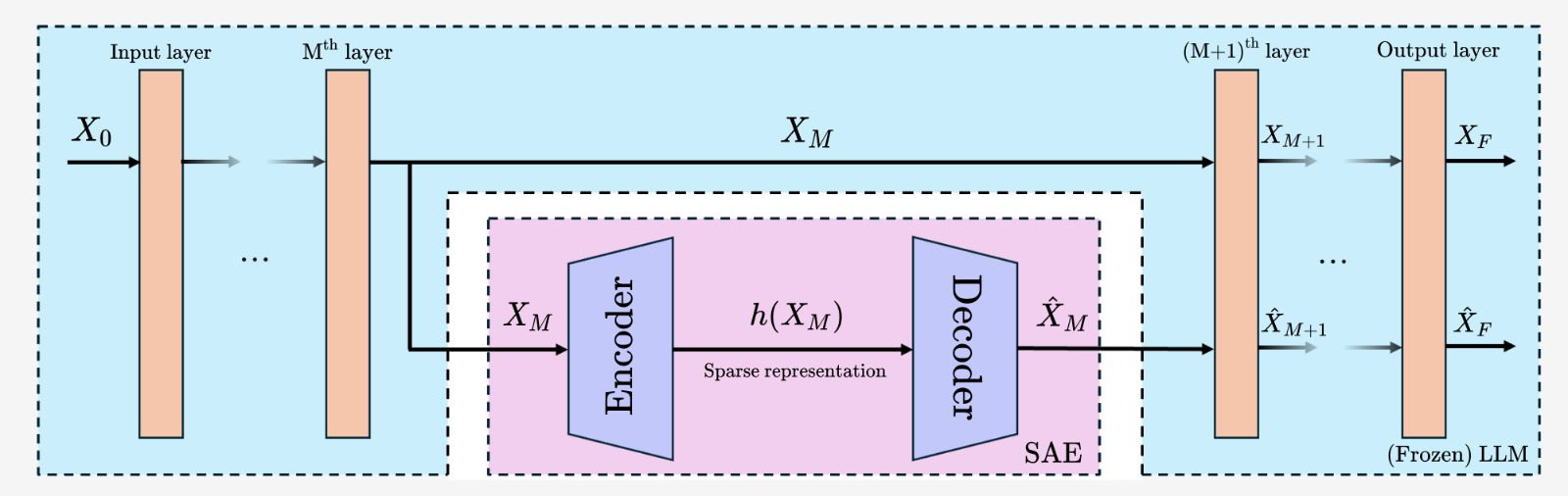}
    \caption{\textbf{Sparse autoencoder setup.} The SAE compresses intermediate LLM activations $X_M$ into a sparse representation $h(X_M)$ and reconstructs them as $\hat{X}_M$. To evaluate the dictionary's fidelity, the true activations are replaced with $\hat{X}_M$, and the forward pass is resumed through the frozen LLM to yield the reconstructed final logits $\hat{X}_F$. Performance is measured by the divergence between the original outputs $X_F$ and $\hat{X}_F$.}
    \label{fig:sae_diagram}
    \vspace{-5pt}
\end{figure}

Despite their empirical success, standard $\ell_1$-regularized SAEs suffer from severe optimization pathologies---most notably, \textit{feature starvation} (dead neurons) and \textit{magnitude shrinkage} (systematic underestimation of feature activations). While various architectural patches have been proposed to circumvent this (ranging from computationally expensive resampling heuristics to nondifferentiable hard-masking), these techniques treat the symptoms rather than addressing the underlying geometric root cause. In practice, such approaches rely heavily on auxiliary mechanisms---such as proxy gradients, resampling strategies, or additional loss terms---to prevent feature collapse. These interventions are often computationally expensive, introduce additional complexity, and can substantially increase the training cost required to reach competitive performance.

In this work, we show that feature starvation arises from the polyhedral geometry of the underlying sparse coding problem. Training an SAE is an attempt to approximate the computationally heavy least absolute shrinkage and selection operator (LASSO) objective \cite{tibshirani1996lasso} in a single, amortized forward pass. However, standard $\ell_1$ formulations induce an unstable and discontinuous sparse coding map, which is fundamentally misaligned with shallow, amortized encoders and leads to systematic training failure. To address these challenges, we introduce \emph{adaptive elastic net sparse autoencoders} (AEN-SAEs), a fully differentiable architecture that reduces the need for auxiliary heuristics while improving computational efficiency. Drawing on classical sparse regression, AEN-SAEs apply an adaptive $\ell_1$ penalty that vanishes for true signals to eliminate shrinkage bias. Additionally, we couple this with a constant $\ell_2$ structural anchor. This $\ell_2$ penalty ensures strong convexity and explicitly bounds the stability constant of the sparse coding map. Consequently, the AEN-SAE can be interpreted as a form of diagonal reweighting analogous to preconditioning, yielding a stable and fully differentiable architecture that naturally avoids dead features. Our main contributions are as follows:
\begin{itemize}[noitemsep]
    \item \textbf{Theoretical grounding:} We formalize the feature starvation pathology of standard SAEs through the lens of sparse recovery. We show that our AEN-SAE formulation satisfies oracle-like selection consistency while explicitly bounding the stability constant to guarantee polyhedral stability in the activation space.
    \item \textbf{Architectural efficiency:} We introduce a scalable, streaming mechanism for computing adaptive LASSO weights without requiring computationally expensive proxy gradients.
    \item \textbf{Empirical scaling:} Across controlled synthetic environments and real-world models (Pythia 70M~\citep{biderman2023pythia}, Llama 3.1 8B~\citep{grattafiori2024llama3}), we demonstrate that AEN-SAEs mitigate dead features and achieve competitive reconstruction curves compared to baselines.
\end{itemize}

\section{Background and Related Work}

We review the relevant background on SAEs and classical sparse regression and introduce the geometric perspective that underpins our approach. In particular, we connect SAE training to amortized sparse recovery and highlight the stability limitations of $\ell_1$-based formulations that motivate our method.

\subsection{Mechanistic Interpretability and SAEs}

The superposition hypothesis posits that neural networks encode more features than their ambient dimensionality by packing them into approximately orthogonal directions in activation space~\citep{elhage2022toymodelssuperposition}. SAEs aim to decode this structure by learning an overcomplete dictionary $D \in \mathbb{R}^{d_{\textit{model}} \times d_{\textit{dict}}}$ (with $d_{\textit{dict}} \gg d_{\textit{model}}$) that reconstructs the residual stream $x \in \mathbb{R}^{d_{\textit{model}}}$ as a sparse linear combination of feature directions.

The standard SAE objective takes the form of an $\ell_1$-regularized reconstruction loss:
\begin{equation*}
    \mathcal{L}_{\text{SAE}} = \mathbb{E}_x \left[ \frac{1}{2} \|x - (D h(x) + b_{\text{dec}})\|_2^2 + \lambda_1 \|h(x)\|_1 \right],
\end{equation*}
where $h(x) = \text{ReLU}(W_{\text{enc}}x + b_{\text{enc}})$ denotes the encoder's predicted sparse activation. In practice, optimizing this objective via gradient descent leads to two characteristic pathologies: \textit{shrinkage bias} and \textit{feature starvation}. The $\ell_1$ penalty systematically suppresses the magnitude of active features and prevents them from reaching their true magnitudes. Simultaneously, the encoder may fail to activate certain features altogether, trapping their preactivations in the negative regime. Since the derivative of the ReLU 
vanishes on negative inputs, these features receive no gradient signal and remain permanently inactive, effectively becoming \emph{dead neurons}. This phenomenon is known as \emph{feature starvation} and is the central failure mode we address in this work.

Early attempts to mitigate feature starvation relied on heuristic resampling of dead neurons~\citep{bricken2023monosemanticity}. To avoid full dictionary resets, subsequent work introduced \textit{Ghost Gradients}---an empirical heuristic that provides a proxy gradient signal to inactive features. Variants of this approach have become standard in large-scale SAE training, where auxiliary losses are used to artificially revive dead neurons~\citep{gao2024scaling}. While effective in practice, these methods are fundamentally heuristic: they introduce additional optimization pathways that are not derived from the underlying objective, require maintaining auxiliary gradient buffers, and artificially route loss, thereby increasing computational overhead.

Architectural innovations have sought to decouple the two failure modes of $\ell_1$. Gated SAEs separate feature selection from magnitude estimation, reducing shrinkage bias~\citep{rajamanoharan2024improving}. More recently, hard-masking approaches such as JumpReLU~\citep{rajamanoharan2024jumping, lieberum2024gemma}, TopK~\citep{gao2024scaling}, and BatchTopK~\citep{bussmann2024batchtopk} enforce a strictly $K$-sparse bottleneck and have emerged as state-of-the-art in efficiency. However, hard truncation exacerbates feature starvation, and these methods still rely heavily on auxiliary losses to prevent feature collapse~\citep{gao2024scaling}. Related approaches, such as feature choice SAEs~\citep{ayonrinde2024feature}, introduce additional routing mechanisms to achieve similar effects.

While empirically effective, hard-masking approaches introduce the fundamental theoretical limitation of nondifferentiability. Enforcing exact sparsity corresponds to solving an $\ell_0$-constrained problem, which is combinatorial and NP-hard~\citep{foucart2013compressivesensing}. Recent work~\citep{lee2025evaluating} has begun to incorporate ideas from sparse recovery theory into SAE design, showing that standard training can cause features to absorb or wedge together~\citep{chanin2025absorption}. These methods mitigate such effects by explicitly regularizing the dictionary to maintain quasi-orthogonality (low mutual coherence). Similarly, contemporary architectural alternatives have sought to resolve feature absorption through hierarchical constraints. Matryoshka SAEs \citep{bussmann2025matryoshka} force smaller nested dictionaries to learn independent, high-level concepts without relying on larger dictionaries, while subsequent extensions use attribution-guided distillation to freeze core features and reduce redundancy \citep{martinlinares2025attribution}. However, like previous methods, these approaches still rely heavily on nondifferentiable hard-masking mechanisms.

\subsection{Classical Polyhedral Sparse Regression}

The behavior of $\ell_1$-regularized SAEs is closely related to classical sparse recovery~\citep{foucart2013compressivesensing, wainwright2019highdimensional}. For a fixed dictionary $D$, the ideal sparse representation $h^{*}(x)$ is given by the \emph{LASSO problem}~\citep{tibshirani1996lasso}:
\begin{equation}
    \label{eq:lasso-sol}
    h^{*}(x) = \argmin_{h(x) \in \mathbb{R}^{d_{\text{dict}}}} \frac{1}{2} \|x - Dh(x)\|_2^2 + \lambda_1 \|h(x)\|_1.
\end{equation}
The goal is to recover the \emph{active set} $\mathcal{A} = \{i : h^{*}_i \neq 0\}$, whose stability is governed by geometric properties of the dictionary---in particular mutual incoherence and the conditioning of the active Gram matrix.

Classical recovery guarantees for LASSO rely on the \emph{irrepresentability condition} (IRC)~\citep{wainwright2019highdimensional, zhao2006consistency}, which requires that inactive features are not too correlated with the active set. In overcomplete LLM dictionaries, this condition is necessarily violated, leading the $\ell_1$ penalty to suppress true feature magnitudes in order to avoid selecting correlated noise. \emph{Adaptive LASSO}~\citep{zou2006adaptivelasso} addresses this suppression by reweighting the penalty, $w_i \propto |\hat{h}_i|^{-\gamma}$ with an initial estimate $\hat{h}$, achieving oracle support recovery where the active set is perfectly isolated and shrinkage bias is reduced. However, it requires a two-stage procedure---an initial estimate followed by reweighted optimization---which is incompatible with efficient amortized inference in LLMs. While online variants exist~\citep{mairal2010online}, they do not directly translate to the highly parallel, low-overhead setting required for LLMs.

A second factor related to stability is the curvature of the active solution, determined by the conditioning of the active Gram matrix $D_{\mathcal{A}}^\top D_{\mathcal{A}}$. When active features are highly collinear, this matrix becomes ill-conditioned, flattening the loss landscape and leading to unstable solutions. Elastic net regularization~\citep{zou2005elasticnet} addresses this problem by introducing an $\ell_2$ penalty ($\lambda_2 \|h\|_2^2$), which enforces strong convexity. However, it comes at the cost of increased shrinkage and feature grouping, which can degrade interpretability.

\subsection{Amortized LASSO and Polyhedral Geometry}
\label{subsec:amortized-lasso}

We interpret SAE training as learning an amortized approximation to the LASSO solution map: For a fixed dictionary $D$, the encoder $h(x)$ aims to predict the optimal sparse code $h^*(x)$ for each input $x$ in a single forward pass.

Standard approaches to amortized sparse recovery unroll iterative solvers into deep architectures such as LISTA~\citep{gregor2010lista, chen2021lista}. However, these deep architectures are computationally intractable at the scale of large language models. In contrast, SAEs rely on a strictly shallow architecture, typically a single linear projection followed by a ReLU---placing strong constraints on the complexity of the mappings they can represent. Our work addresses the resulting challenge of ensuring that the $\ell_1$-induced solution map is sufficiently stable to be learned by such shallow encoders.

A key distinction between classical sparse recovery and the SAE setting has to do with the role of sparsity. In classical compressed sensing, the sparsity level is an unknown, fixed property of the data-generating process. In contrast, in mechanistic interpretability, sparsity is typically a design choice, selected to balance interpretability and reconstruction.

The ability to learn a shallow amortized map depends critically on the stability of the target solution. Under an $\ell_1$ penalty, the solution space is polyhedral, and the sensitivity of the optimal sparse codes $h^*(x)$ to perturbations in $x$ is governed by a local error bound characterized by the \textit{Hoffman constant} $H$~\citep{hoffman1952approximate, robinson1981polyhedral}. This constant depends on two geometric properties of the dictionary: the conditioning of the active Gram matrix, via $\lambda_{\min}(D_{\mathcal{A}}^\top D_{\mathcal{A}})$, and the interaction between active and inactive features, measured by $\|D_{\mathcal{A}^c}^\top D_{\mathcal{A}}\|_2$. For a given active set, it scales as
\begin{equation*}
    H_{\mathcal{A}} \sim \mathcal{O}\left( \|D_{\mathcal{A}^c}^\top D_{\mathcal{A}}\|_2 \cdot \frac{1}{\lambda_{\min}(D_{\mathcal{A}}^\top D_{\mathcal{A}})} \right).
\end{equation*}
As the dictionary becomes more coherent, $H$ diverges, and the solution map becomes increasingly nonsmooth. This instability makes the mapping difficult for shallow encoders to approximate.

\section{Adaptive Elastic Net SAEs (AEN-SAEs)}
\label{sec:aen}

To address the geometric instability of $\ell_1$-based sparse coding, we introduce \emph{adaptive elastic net sparse autoencoders} (AEN-SAEs), which combine adaptive sparsity with explicit curvature control. The key idea is to stabilize the amortized solution map by simultaneously suppressing spurious feature interactions and enforcing strong convexity.

Let $h(x) \in \mathbb{R}^{d_{\text{dict}}}$ denote the sparse latent representation predicted by the encoder for an input $x \in \mathbb{R}^{d_{\text{model}}}$. We define the AEN-SAE objective as:
\begin{equation}
    \label{eq:aen-sae-loss}
    \mathcal{L}_{AEN} = \mathbb{E}_x \left[ \frac{1}{2} \|x - (D h(x) + b_{\text{dec}})\|_2^2 + \lambda_1 \sum_{i=1}^{d_{\text{dict}}} w_i |h_i(x)| + \lambda_2 \|h(x)\|_2^2 \right].
\end{equation}
Here, $w_i$ denotes a feature-specific penalty weight for the $i$-th feature, and $\lambda_1, \lambda_2 > 0$ control the sparsity and structural regularization, respectively. Compared to the standard SAE objective, this replaces the uniform $\ell_1$ penalty with an adaptive weighted $\ell_1$ penalty to isolate the active set, and introduces a fixed $\ell_2$ structural anchor to bound active curvature and ensure stability through strong convexity.

A direct implementation of adaptive LASSO requires a two-pass algorithm, which is computationally infeasible for streaming LLM tokens. Instead, we implement a lightweight online approximation based on an exponential moving average (EMA) of feature activations. Let $h^{(t)}$ denote the post-ReLU sparse activations at training step $t$ across a batch of size $B$. We update the EMA synchronously:
\begin{equation*}
   \bar{h}^{(t)} = \beta \bar{h}^{(t-1)} + (1 - \beta) \frac{1}{B} \sum_{b=1}^{B} |h_b^{(t)}|, 
\end{equation*}
where $\beta$ is the momentum factor.

To ensure robustness across layers and models, we employ a scale-invariant adaptive weighting mechanism. Let $p \in (0,1]$ denote a top-percentile hyperparameter. At step $t$, let $\mathcal{S}_p^{(t)}$ be the index set of the top $p \cdot d_{dict}$ features with the largest EMA activations. We define a reference activation as the mean activity over this set. The raw adaptive weight for feature $i$ is then calculated asynchronously relative to this reference:
\begin{equation*}
   \bar{h}_{\text{ref}}^{(t)} = \frac{1}{|\mathcal{S}_p^{(t)}|} \sum_{j \in \mathcal{S}_p^{(t)}} \bar{h}_j^{(t)} \quad \text{and} \quad w_i^{(t)} = \mathrm{clip}\!\left(\left(\frac{\bar{h}_{\mathrm{ref}}^{(t)}}{\bar{h}_i^{(t)}+\epsilon}\right)^\gamma, \, w_{\min},\, w_{\max}\right). 
\end{equation*}
where $\epsilon = 10^{-5}$ ensures numerical stability, and the bounds $[w_{\min}, w_{\max}]$ prevent extreme gradient scaling. This normalizes feature importance relative to a high-activity reference set and ensures consistent scaling across layers.

Adaptive weighting introduces a concentration dynamic: if a feature remains inactive early in training, its EMA decays toward zero, causing its penalty weight to saturate at $w_{\max}$ and effectively eliminating it from the model. To prevent premature feature death during this initialization phase, we introduce a delayed linear warmup schedule $\rho(t) \in [0,1]$:
\begin{equation*}
    \rho(t)= \begin{cases} 0, & t < T_{\mathrm{warmup}},\\[3pt] \min\!\left(1,\dfrac{t-T_{\mathrm{warmup}}}{T_{\mathrm{ramp}}}\right), & t \ge T_{\mathrm{warmup}}. \end{cases}
\end{equation*}
The effective penalty weight is interpolated as $w_{i, \mathrm{eff}}^{(t)} = 1.0 + \rho(t) (w_i^{(t)} - 1.0)$, so that training begins with a uniform $\ell_1$ penalty and transitions to the adaptive regime as features become established.

The $\ell_2$ term controls the conditioning of the active Gram matrix, while the adaptive weighting mechanism suppresses interaction leakage between active and inactive features. Inactive features receive large penalties ($w_i \to w_{\max}$), driving their activations toward zero and reducing the interaction term $\|D_{\mathcal{A}^c}^\top D_{\mathcal{A}}\|_2$. Conversely, features that are consistently active are assigned small weights ($w_i \to w_{\min}$), effectively removing the $\ell_1$ penalty on the active set and avoiding shrinkage bias.

Theoretical results for the AEN-SAE architecture are provided in Appendix~\ref{sec:theory}. In particular, under a timescale separation assumption, we show that the proposed weighting mechanism achieves oracle-like support recovery in the idealized setting, identifying the active set and reducing shrinkage bias.

\section{Experiments}

We evaluate AEN-SAEs on synthetic and real-world settings to test their robustness under dictionary coherence. We begin with a controlled spiked-model experiment to isolate the effect of coherence, then evaluate on Pythia 70M and Llama 3.1 8B to assess performance on real LLM activations and hyperparameter transfer across model sizes. Full experimental and implementation details are provided in Appendix~\ref{app:setup}.

For LLM-based experiments, we stream text from a deduplicated version of the Pile~\citep{gao2020pile800gbdatasetdiverse}, extract residual-stream activations from a frozen model at a chosen layer, and form token-level batches. Activations are $\ell_2$-normalized per token to ensure consistent scaling across models.

SAEs are trained with Adam at a fixed learning rate using reconstruction loss with regularization. Gradients are clipped for stability, and decoder columns are renormalized after each step. Adaptive-weight models update EMA statistics online and apply reweighting after a warmup period. No auxiliary resampling or proxy-gradient methods are used for any model.

We evaluate reconstruction (MSE, explained variance), sparsity ($\ell_0$), and feature starvation (dead neuron rate). To probe geometric structure, we also measure dictionary coherence, condition numbers, and feature-utilization statistics (e.g., entropy and Gini coefficient). For LLMs, we additionally assess downstream fidelity via reconstruction patching. Full metric definitions are provided in Appendix~\ref{sec:extended-methodology}.

\subsection{Spiked Model: Coherence-Induced Instability}

To isolate the effect of dictionary coherence, we consider a synthetic spiked model where each atom is constructed as a mixture of an independent direction and a shared component~\citep{wainwright2019highdimensional}:
\begin{equation*}
    d_j = \frac{\sqrt{1-\rho}u_j + \sqrt{\rho}v}{\|\sqrt{1-\rho}u_j + \sqrt{\rho}v\|_2},
\end{equation*}
where $\rho \in [0,1]$ controls the expected pairwise coherence. This allows for the variation of coherence while keeping sparsity and noise fixed. A detailed description of the spiked model construction and additional experimental results are provided in Appendix~\ref{app:spiked}.

We generate data by sampling $k$-sparse codes ($k=16$) and forming $x^* = D^* h^*$ using a fixed teacher dictionary ($d_{\text{model}}=256$, $d_{\text{dict}}=1024$). Full experimental details are provided in Appendix~\ref{app:setup}.

\textbf{Low coherence ($\rho=0$).} When dictionary atoms are independent, all methods perform well (Figure~\ref{fig:spiked-pareto}). In this regime, feature selection is trivial and even hard-masking methods exhibit low failure rates. AEN-SAE maintains full dictionary utilization with no dead features, at the cost of a small reduction in explained variance due to the $\ell_2$ regularization.

\begin{figure}
    \centering
    \includegraphics[width=0.8\linewidth]{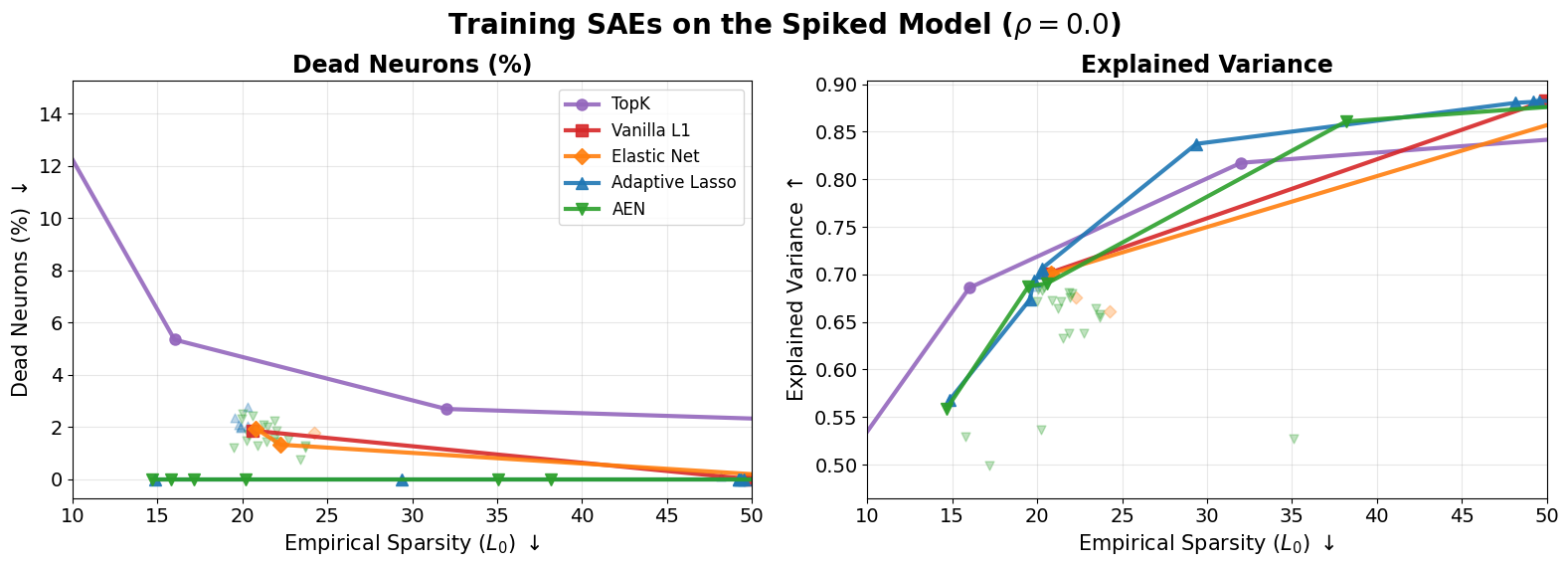}
    \caption{\textbf{Pareto curves for spiked dictionary ($\rho = 0$).} When dictionary atoms are uncorrelated, all architectures work reasonably well, with similar explained variance curves (right). However, even in this simple case, dead neurons in TopK exist while AEN-SAE solves this. Note that adaptive LASSO concentrates around the true sparsity level and thus there is no Pareto curve; it is less flexible for mechanistic interpretability since the sparsity is an interpretability choice rather than ground truth.}
    \label{fig:spiked-pareto}
    \vspace{-10pt}
\end{figure}

\textbf{High coherence ($\rho=0.9$).} In contrast, high coherence induces severe instability across standard SAE architectures. Continuous relaxations become highly sensitive to hyperparameters, while adaptive LASSO frequently collapses to degenerate solutions. Hard-masking methods such as TopK suffer from severe feature starvation, eliminating over 95\% of the dictionary at typical sparsity levels (Table~\ref{tab:synthetic_rho0.9_comparison}).

AEN-SAE mitigates this failure mode by stabilizing both curvature and feature interactions. While it does not completely eliminate feature starvation, it reduces dead features to $\approx 41\%$ and maintains substantially higher reconstruction quality. Moreover, AEN yields significantly improved geometric structure, reducing the active condition number ($\kappa \approx 21$ vs.\ $769$ for elastic net) and lowering interaction leakage.

These results demonstrate that coherence fundamentally limits the stability of $\ell_1$-based sparse coding, and that AEN provides a principled mechanism for mitigating this instability.

\begin{table}[htbp]
\centering
\caption{\textbf{Synthetic spiked model comparison ($\rho=0.9$).} Head-to-head performance of TopK vs. AEN-SAE in a highly collinear regime at matched empirical sparsity levels.}
\label{tab:synthetic_rho0.9_comparison}
\resizebox{0.6\textwidth}{!}{%
\begin{tabular}{llrrr}
\toprule
\textbf{Empirical $\ell_0$} & \textbf{Architecture} & \textbf{Dead Neurons $\downarrow$} & \textbf{Expl. Var. $\uparrow$} & \textbf{Shrinkage $\uparrow$} \\
\midrule
\multirow{2}{*}{$\approx 16$} & TopK & 97.7\% & 0.470 & 0.996 \\
& \textbf{AEN-SAE} & \textbf{40.9\%} & \textbf{0.507} & 0.996 \\
\cmidrule{1-5}
\multirow{2}{*}{$\approx 32$} & TopK & 96.5\% & 0.547 & 0.996 \\
& \textbf{AEN-SAE} & \textbf{41.0\%} & \textbf{0.665} & \textbf{0.997} \\
\bottomrule
\end{tabular}%
}
\vspace{-10pt}
\end{table}

\subsection{Pythia 70M}

We evaluate AEN-SAEs on activations from a frozen Pythia 70M model~\citep{biderman2023pythia}, training an $8\times$ overcomplete dictionary ($d_{\text{dict}}=4096$) on layer-3 residuals. Full experimental details and extended analysis for Pythia 70M are provided in Appendix~\ref{app:pythia}.

\textbf{Hyperparameters and efficiency.}
Although AEN-SAEs introduce additional hyperparameters, we find them stable across runs and consistent with classical adaptive elastic net guidance. In practice, only the base sparsity penalty $\lambda_1$ needs to be tuned to reach a desired sparsity level. The method introduces negligible overhead ($<0.1\%$ FLOPs increase compared to TopK) and requires no auxiliary resampling or proxy-gradient mechanisms. A detailed hyperparameter sweep and robustness analysis are provided in Appendix~\ref{sec:hyperparam-sweep-pythia}.

\textbf{Dictionary coherence as a failure modality.} Unlike synthetic settings, LLM activations exhibit strong feature correlations, inducing a highly coherent dictionary. In this regime, standard SAE variants fail in distinct ways. Vanilla $\ell_1$ requires large penalties to suppress correlated features, leading to severe shrinkage (up to $\approx 25\%$ magnitude loss at $\ell_0 \approx 22$) and high sensitivity to hyperparameters. Elastic net stabilizes training but produces dense, poorly conditioned representations (median $\kappa > 10^{13}$). Adaptive LASSO lacks a structural $\ell_2$ anchor and becomes numerically unstable under collinearity and frequently collapses, which is consistent with known limitations of $\ell_1$ methods in correlated settings.

\textbf{Reconstruction vs.\ feature utilization.} Figure~\ref{fig:pythia70m-pareto} and Table~\ref{tab:pythia_aen_vs_topk} illustrate a clear trade-off between raw reconstruction and feature utilization. At $\ell_0 \approx 32$, TopK achieves higher explained variance ($0.76$), but does so by concentrating capacity into highly redundant feature hubs (max coherence $0.99$). Despite this high extreme, TopK exhibits a low p90 coherence (0.327), suggesting that it learns a sparse set of nearly identical high-frequency features while starving the surrounding correlated semantic space (nearly 14\% dead features).

In contrast, AEN distributes activation mass across correlated feature groups, which substantially reduces feature starvation (down to $\approx 1\%$ at $\ell_0 \approx 32$) and lowers extreme redundancy (max coherence $0.96$, p90 coherence $0.516$). This results in a more balanced and geometrically structured dictionary, though with a modest reduction in explained variance. AEN operates at a different point in the trade-off space, prioritizing feature utilization and geometric stability over maximal reconstruction.

\begin{figure}
    \centering
    \includegraphics[width=0.8\linewidth]{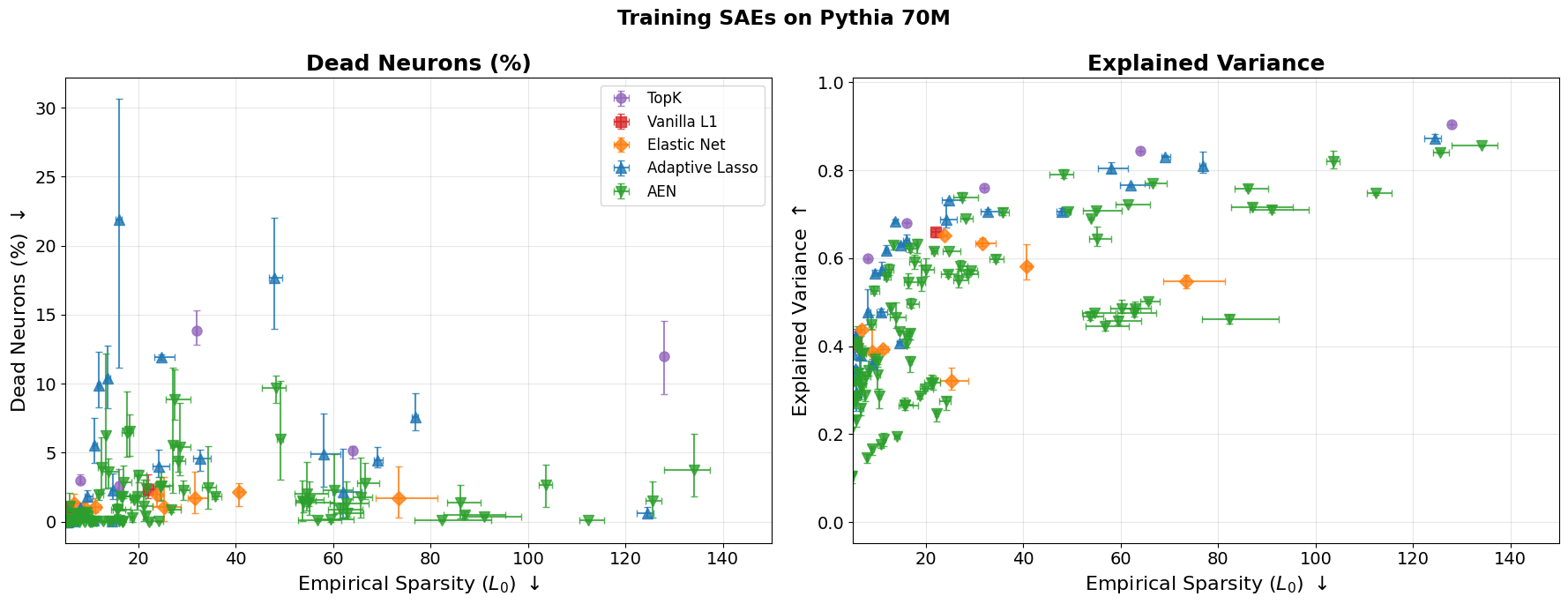}
    \caption{\textbf{Pareto frontiers on the Pythia 70M residual stream across three random seeds for all tested configurations.} The left panel illustrates feature starvation. The right panel displays raw reconstruction fidelity. Error bars denote $[\min, \max]$ bounds.}
    \label{fig:pythia70m-pareto}
    \vspace{-10pt}
\end{figure}

\begin{table}[htbp]
\centering
\caption{\textbf{Head-to-head performance on Pythia 70M.} Comparing AEN to the TopK baseline at targeted sparsity bottlenecks across three random seeds. Results are reported as mean [min, max].}
\label{tab:pythia_aen_vs_topk}
\resizebox{0.9\textwidth}{!}{%
\begin{tabular}{llrrrrr}
\toprule
\textbf{Target $\ell_0$} & \textbf{Architecture} & \textbf{Achieved $\ell_0$} & \textbf{Dead Neurons. $\downarrow$} & \textbf{Expl. Var. $\uparrow$} & \textbf{p90 Coh. $\uparrow$} & \textbf{Max Coh. $\downarrow$} \\
\midrule
\multirow{2}{*}{$\approx 16$} & TopK & 16.0 [16.0, 16.0] & \textbf{2.3\%} [2.0, 2.7] & \textbf{0.680} [0.679, 0.681] & 0.326 [0.319, 0.334] & 0.986 [0.986, 0.987] \\
& \textbf{AEN-SAE} & 18.2 [18.0, 18.4] & 6.6\% [4.8, 7.8] & 0.632 [0.612, 0.644] & \textbf{0.685} [0.677, 0.696] & \textbf{0.936} [0.935, 0.938] \\
\midrule
\multirow{2}{*}{$\approx 32$} & TopK & 32.0 [32.0, 32.0] & 13.9\% [12.8, 15.3] & \textbf{0.760} [0.759, 0.760] & 0.329 [0.327, 0.330] & 0.989 [0.989, 0.990] \\
& \textbf{AEN-SAE} & 33.9 [33.7, 34.1] & \textbf{1.3\%} [1.3, 1.4] & 0.719 [0.696, 0.759] & \textbf{0.628} [0.603, 0.665] & \textbf{0.963} [0.957, 0.968] \\
\bottomrule
\end{tabular}%
}
\vspace{-5pt}
\end{table}

\textbf{Feature utilization at scale.} At higher sparsity levels ($\ell_0 \approx 128$), the difference becomes more pronounced (Table~\ref{tab:feature_utilization}). TopK concentrates $86\%$ of activation mass into the top $10\%$ of features (Gini $0.903$), indicating severe underutilization of capacity. AEN mitigates this effect, reducing the Gini coefficient to $0.670$ and increasing entropy, while lowering dead features to $\approx 3\%$. This behavior is driven by the adaptive reweighting mechanism: highly active features are penalized more strongly, which prevents dominance of redundant hubs and encourages recruitment of underutilized features.

\begin{table}[htbp]
\centering
\caption{\textbf{Feature utilization and starvation at relaxed sparsities ($\ell_0 \approx 128$).} Results are reported as mean [min, max] across random seeds. While TopK maintains higher explained variance, it does so by permanently starving $\approx 12\%$ of the dictionary and concentrating over $86\%$ of its activation mass into a narrow subset of features. AEN resolution of this concentration is statistically robust across seeds, yielding higher entropy, a vastly lower Gini coefficient, and a fully utilized dictionary.}
\label{tab:feature_utilization}
\resizebox{\textwidth}{!}{%
\begin{tabular}{lrrrrrrr}
\toprule
\textbf{Architecture} & \textbf{Achieved $\ell_0$} & \textbf{Expl. Var. $\uparrow$} & \textbf{Shrinkage $\uparrow$} & \textbf{Dead Neurons. $\downarrow$} & \textbf{Gini $\downarrow$} & \textbf{Norm. Entropy $\uparrow$} & \textbf{Top 10\% Mass $\downarrow$} \\
\midrule
TopK & 128.0 [128.0, 128.0] & \textbf{0.904} [0.904, 0.905] & \textbf{0.954} [0.953, 0.955] & 12.0\% [9.3, 14.5] & 0.903 [0.902, 0.904] & 0.755 [0.754, 0.757] & 86.1\% [85.8, 86.4] \\
\textbf{AEN-SAE} & 127.5 [127.3, 128.0] & 0.854 [0.853, 0.856] & 0.876 [0.868, 0.881] & \textbf{3.1\%} [1.8, 5.2] & \textbf{0.670} [0.641, 0.685] & \textbf{0.899} [0.893, 0.909] & \textbf{48.2\%} [44.6, 50.1] \\
\bottomrule
\end{tabular}%
}
\vspace{-8pt}
\end{table}

Overall, these results demonstrate that while hard-masking methods maximize reconstruction, they do so by collapsing onto a highly redundant subset of features. AEN instead provides a principled mechanism for improving feature utilization and geometric stability under coherence, without relying on auxiliary heuristics.

\subsection{Llama 3.1 8B}
\label{subsec:llama}

To evaluate AEN-SAEs at scale, we extract dense representations from the middle layer (layer 16) of a frozen Llama 3.1 8B~\citep{grattafiori2024llama3} ($d_{\text{model}} = 4096$) and train a $32\times$ overcomplete dictionary ($d_{\text{dict}}=131,072$). We stream the deduplicated Pile dataset with a sequence length of 128, applying per-token $\ell_2$ normalization. Optimization proceeds via Adam (batch size 4,096, learning rate $5 \times 10^{-4}$) for 100,000 steps ($\approx 400$M tokens), utilizing a 4,000-step linear warmup for the AEN-SAE adaptive weights. The SAE is trained in full precision (FP32) against BF16 LLM representations, and final downstream validation is performed on a held-out shard of 20,000 documents.

\textbf{Hyperparameter transfer and scale invariance.} Exhaustive hyperparameter tuning at this scale is computationally prohibitive. However, AEN-SAEs exhibit strong scale invariance; specifically, structural hyperparameters transfer directly from smaller models. This approach is philosophically aligned with maximal update parameterization ($\mu P$) \citep{yang2022tensorprogramsvtuning}, which establishes scaling laws that keep optimal hyperparameters stable as model width increases. We thus fix the structural hyperparameters $(\lambda_2 = 10^{-4}, \gamma = 0.5, \beta = 0.9999, p = 0.05, w_{\min} = 0.01, w_{\max} = 10.0)$ based on Pythia 70M and tune only $\lambda_1$ to target sparsity and hit target $\ell_0$ bottlenecks. Empirically, this transfer is successful and enables reliable deployment without large-scale hyperparameter sweeps.

\textbf{Failure of hard-masking at scale.} Table~\ref{tab:llama_aen_vs_topk} reveals the geometric cost of hard-masking for large-scale models. Across all sparsity levels, TopK exhibits extreme feature starvation, with $\approx 80\%$ of the dictionary remaining inactive. At the same time, it achieves high reconstruction by collapsing onto a highly redundant subset of features, evidenced by maximum coherence reaching $1.000$ immediately at $\ell_0 \approx 32$ and maintaining a heavily skewed p90 coherence across all capacities. This behavior reflects a degenerate solution: rather than exploring the available 131,072-dimensional capacity, the model concentrates activation mass into a small set of nearly identical features, leaving most of the dictionary unused.

\textbf{AEN: utilization under coherence.} In contrast, AEN-SAEs explicitly suppress the redundancy exhibited by TopK. By penalizing frequently co-activated features, AEN redistributes activation mass across correlated groups, reducing both feature starvation and extreme coherence. For example, at $\ell_0 \approx 128$, the p90 coherence drops from $0.438$ (TopK) to $0.218$, while dead features are reduced from $\approx 80\%$ to $\approx 39\%$. Notably, even AEN retains a nontrivial fraction of inactive features, suggesting that complete elimination of feature starvation may be fundamentally constrained and not purely an optimization issue, but also a consequence of sampling limitations at this scale.

While AEN incurs a reduction in explained variance, this reflects a different operating point in the trade-off space: AEN prioritizes feature utilization and geometric stability over maximal reconstruction.

\textbf{Scaling behavior.} Our results confirm that the benefits of AEN persist at large scale. The same geometric mechanism observed in synthetic and mid-scale experiments-coherence-induced collapse and its mitigation via adaptive reweighting—remains the dominant factor governing performance. Crucially, this is achieved without auxiliary heuristics and with hyperparameters transferred directly from smaller models. Overall, AEN-SAEs provide a scalable approach to sparse representation learning that prioritizes feature utilization and geometric stability in regimes where standard methods degenerate. Notably, this performance is achieved with negligible computational overhead relative to TopK and without auxiliary resampling or proxy-gradient mechanisms.

\begin{table}[htbp]
\centering
\caption{\textbf{Head-to-head performance on Llama 3.1 8B.} Comparing AEN to TopK baseline at targeted sparsity bottlenecks.}
\label{tab:llama_aen_vs_topk}
\resizebox{0.9\textwidth}{!}{
\begin{tabular}{llrrrrrrr}
\toprule
\textbf{Target $L_0$} & \textbf{Architecture} & \textbf{Achieved $L_0$} & \textbf{Dead Feat. $\downarrow$} & \textbf{Expl. Var. $\uparrow$} & \textbf{MSE $\downarrow$} & \textbf{p50 Coh. $\downarrow$} & \textbf{p90 Coh. $\downarrow$} & \textbf{Max Coh. $\downarrow$} \\
\midrule
\multirow{2}{*}{$\approx 32$} & Top-$K$ & 32.0 & 88.0\% & \textbf{0.505} & \textbf{0.406} & 0.127 & 0.313 & 1.000 \\
& \textbf{AEN-SAE} & 34.5 & \textbf{55.4\%} & 0.199 & 0.671 & \textbf{0.097} & \textbf{0.143} & \textbf{0.585} \\
\cmidrule{1-9}
\multirow{2}{*}{$\approx 64$} & Top-$K$ & 64.0 & 80.2\% & \textbf{0.580} & \textbf{0.342} & 0.097 & 0.360 & 1.000 \\
& \textbf{AEN-SAE} & 68.8 & \textbf{45.0\%} & 0.400 & 0.503 & \textbf{0.093} & \textbf{0.182} & \textbf{0.862} \\
\cmidrule{1-9}
\multirow{2}{*}{$\approx 128$} & Top-$K$ & 127.8 & 80.3\% & \textbf{0.656} & \textbf{0.294} & 0.100 & 0.438 & 1.000 \\
& \textbf{AEN-SAE} & 125.8 & \textbf{39.3\%} & 0.469 & 0.443 & \textbf{0.087} & \textbf{0.218} & \textbf{0.989} \\
\cmidrule{1-9}
\end{tabular}
}
\end{table}

\section{Discussion, Limitations, and Future Research}

In this work, we reframed feature starvation and shrinkage bias in sparse autoencoders as geometric and optimization pathologies arising from overcomplete, highly coherent dictionaries. Building on classical results from high-dimensional statistics, we introduced AEN-SAEs: a fully differentiable architecture that combines online adaptive $\ell_1$ reweighting with an $\ell_2$ structural term to stabilize the sparse coding map.

Our analysis shows that coherence is a primary driver of failure in standard SAE formulations. In such regimes, $\ell_1$-based methods become unstable, while hard-masking approaches achieve strong reconstruction by collapsing onto a small, highly redundant subset of features. AEN-SAEs provide a principled alternative: by directly controlling curvature and feature interactions, they improve feature utilization and geometric stability without relying on auxiliary heuristics such as resampling or proxy gradients.

Beyond architectural design, our results highlight the importance of evaluating sparse representations through geometric diagnostics, including dictionary coherence, interaction between active and inactive sets, and the conditioning of the active manifold. These metrics provide a more faithful view of representation structure than reconstruction alone, and are particularly relevant for interpretability, where underutilized dictionaries may fail to capture rare but semantically important features.

\textbf{Limitations and future work.}  While AEN-SAEs improve stability and utilization, several limitations remain. First, as with all $\ell_1$-based methods, achieving a target sparsity level requires tuning $\lambda_1$, which is less direct than specifying a fixed $k$ in hard-masking approaches. Developing adaptive or self-tuning sparsity mechanisms remains an important direction.

Second, although AEN significantly reduces feature starvation, a nontrivial fraction of neurons remain inactive at scale (e.g., $40$--$50\%$ at the 8B model). This suggests that feature inactivity is not purely an optimization failure but also reflects sampling limitations at scale: finite training horizons may be insufficient to activate highly specialized features in very large dictionaries.

Third, our evaluation focuses on geometric and statistical properties of the learned representations. A comprehensive assessment of human-centered interpretability~\citep{liu2025olmotracetracinglanguagemodel} remains for future work, and will likely require both larger-scale training and dedicated evaluation protocols.

Finally, our results suggest a promising hybrid direction: combining geometry-aware methods such as AEN with structure-informed resampling strategies based on coherence or feature utilization, or integrating adaptive $\ell_2$ anchors into hierarchical architectures such as Matryoshka SAEs~\citep{bussmann2025matryoshka} to enforce both geometric stability and multi-level feature abstraction.

\begin{ack}
A.M.~is supported by the EPSRC AI Hub on Mathematical Foundations of Intelligence: An ``Erlangen Programme'' for AI [EP/Y028872/1]. 
K.Y.~is supported by JSPS KAKENHI (24K15120, 24H00247, 26K02871).
\end{ack}

\bibliographystyle{abbrvnat}
\bibliography{bibliography}

@misc{bricken2023monosemanticity,
  title        = {Towards Monosemanticity: Decomposing Language Models With Dictionary Learning},
  author       = {Bricken, Trenton and Templeton, Adly and Batson, Joshua and Chen, Brian and Jermyn, Adam and Conerly, Tom and Turner, Nicholas L. and Anil, Cem and Denison, Carson and Askell, Amanda and Lasenby, Robert and Wu, Yifan and Kravec, Shauna and Schiefer, Nicholas and Maxwell, Tim and Joseph, Nicholas and Tamkin, Alex and Nguyen, Karina and McLean, Brayden and Burke, Josiah E. and Hume, Tristan and Carter, Shan and Henighan, Tom and Olah, Chris},
  year         = {2023},
  url = {https://transformer-circuits.pub/2023/monosemantic-features},
}

@misc{cunningham2023sparseautoencoders,
      title={Sparse Autoencoders Find Highly Interpretable Features in Language Models}, 
      author={Hoagy Cunningham and Aidan Ewart and Logan Riggs and Robert Huben and Lee Sharkey},
      year={2023},
      eprint={2309.08600},
      archivePrefix={arXiv},
      primaryClass={cs.LG},
      url={https://doi.org/10.48550/arXiv.2309.08600}, 
}

@misc{elhage2022toymodelssuperposition,
      title={Toy Models of Superposition}, 
      author={Nelson Elhage and Tristan Hume and Catherine Olsson and Nicholas Schiefer and Tom Henighan and Shauna Kravec and Zac Hatfield-Dodds and Robert Lasenby and Dawn Drain and Carol Chen and Roger Grosse and Sam McCandlish and Jared Kaplan and Dario Amodei and Martin Wattenberg and Christopher Olah},
      year={2022},
      eprint={2209.10652},
      archivePrefix={arXiv},
      primaryClass={cs.LG},
      url={https://doi.org/10.48550/arXiv.2209.10652}, 
}

@article{hinton2006reducing,
  title={Reducing the dimensionality of data with neural networks},
  author={Hinton, Geoffrey E and Salakhutdinov, Ruslan R},
  journal={Science},
  volume={313},
  number={5786},
  pages={504--507},
  year={2006},
  publisher={American Association for the Advancement of Science},
  url={https://doi.org/10.1126/science.1127647}
}

@misc{geva2022transformers,
      title={Transformer Feed-Forward Layers Build Predictions by Promoting Concepts in the Vocabulary Space}, 
      author={Mor Geva and Avi Caciularu and Kevin Ro Wang and Yoav Goldberg},
      year={2022},
      eprint={2203.14680},
      archivePrefix={arXiv},
      primaryClass={cs.CL},
      url={https://doi.org/10.48550/arXiv.2203.14680}, 
}

@misc{gao2024scaling,
      title={Scaling and evaluating sparse autoencoders}, 
      author={Leo Gao and Tom Dupré la Tour and Henk Tillman and Gabriel Goh and Rajan Troll and Alec Radford and Ilya Sutskever and Jan Leike and Jeffrey Wu},
      year={2024},
      eprint={2406.04093},
      archivePrefix={arXiv},
      primaryClass={cs.LG},
      url={https://doi.org/10.48550/arXiv.2406.04093}, 
}

@misc{rajamanoharan2024improving,
      title={Improving Dictionary Learning with Gated Sparse Autoencoders}, 
      author={Senthooran Rajamanoharan and Arthur Conmy and Lewis Smith and Tom Lieberum and Vikrant Varma and János Kramár and Rohin Shah and Neel Nanda},
      year={2024},
      eprint={2404.16014},
      archivePrefix={arXiv},
      primaryClass={cs.LG},
      url={https://doi.org/10.48550/arXiv.2404.16014}, 
}

@misc{grattafiori2024llama3,
      title={The {Llama} 3 Herd of Models}, 
      author={Aaron Grattafiori and Abhimanyu Dubey and Abhinav Jauhri and Abhinav Pandey and Abhishek Kadian and Ahmad Al-Dahle and Aiesha Letman and Akhil Mathur and Alan Schelten and Alex Vaughan and Amy Yang and Angela Fan and Anirudh Goyal and Anthony Hartshorn and Aobo Yang and Archi Mitra and Archie Sravankumar and Artem Korenev and Arthur Hinsvark and Arun Rao and Aston Zhang and Aurelien Rodriguez and Austen Gregerson and Ava Spataru and Baptiste Roziere and Bethany Biron and Binh Tang and Bobbie Chern and Charlotte Caucheteux and Chaya Nayak and Chloe Bi and Chris Marra and Chris McConnell and Christian Keller and Christophe Touret and Chunyang Wu and Corinne Wong and Cristian Canton Ferrer and Cyrus Nikolaidis and Damien Allonsius and Daniel Song and Danielle Pintz and Danny Livshits and Danny Wyatt and David Esiobu and Dhruv Choudhary and Dhruv Mahajan and Diego Garcia-Olano and Diego Perino and Dieuwke Hupkes and Egor Lakomkin and Ehab AlBadawy and Elina Lobanova and Emily Dinan and Eric Michael Smith and Filip Radenovic and Francisco Guzmán and Frank Zhang and Gabriel Synnaeve and Gabrielle Lee and Georgia Lewis Anderson and Govind Thattai and Graeme Nail and Gregoire Mialon and Guan Pang and Guillem Cucurell and Hailey Nguyen and Hannah Korevaar and Hu Xu and Hugo Touvron and Iliyan Zarov and Imanol Arrieta Ibarra and Isabel Kloumann and Ishan Misra and Ivan Evtimov and Jack Zhang and Jade Copet and Jaewon Lee and Jan Geffert and Jana Vranes and Jason Park and Jay Mahadeokar and Jeet Shah and Jelmer van der Linde and Jennifer Billock and Jenny Hong and Jenya Lee and Jeremy Fu and Jianfeng Chi and Jianyu Huang and Jiawen Liu and Jie Wang and Jiecao Yu and Joanna Bitton and Joe Spisak and Jongsoo Park and Joseph Rocca and Joshua Johnstun and Joshua Saxe and Junteng Jia and Kalyan Vasuden Alwala and Karthik Prasad and Kartikeya Upasani and Kate Plawiak and Ke Li and Kenneth Heafield and Kevin Stone and Khalid El-Arini and Krithika Iyer and Kshitiz Malik and Kuenley Chiu and Kunal Bhalla and Kushal Lakhotia and Lauren Rantala-Yeary and Laurens van der Maaten and Lawrence Chen and Liang Tan and Liz Jenkins and Louis Martin and Lovish Madaan and Lubo Malo and Lukas Blecher and Lukas Landzaat and Luke de Oliveira and Madeline Muzzi and Mahesh Pasupuleti and Mannat Singh and Manohar Paluri and Marcin Kardas and Maria Tsimpoukelli and Mathew Oldham and Mathieu Rita and Maya Pavlova and Melanie Kambadur and Mike Lewis and Min Si and Mitesh Kumar Singh and Mona Hassan and Naman Goyal and Narjes Torabi and Nikolay Bashlykov and Nikolay Bogoychev and Niladri Chatterji and Ning Zhang and Olivier Duchenne and Onur Çelebi and Patrick Alrassy and Pengchuan Zhang and Pengwei Li and Petar Vasic and Peter Weng and Prajjwal Bhargava and Pratik Dubal and Praveen Krishnan and Punit Singh Koura and Puxin Xu and Qing He and Qingxiao Dong and Ragavan Srinivasan and Raj Ganapathy and Ramon Calderer and Ricardo Silveira Cabral and Robert Stojnic and Roberta Raileanu and Rohan Maheswari and Rohit Girdhar and Rohit Patel and Romain Sauvestre and Ronnie Polidoro and Roshan Sumbaly and Ross Taylor and Ruan Silva and Rui Hou and Rui Wang and Saghar Hosseini and Sahana Chennabasappa and Sanjay Singh and Sean Bell and Seohyun Sonia Kim and Sergey Edunov and Shaoliang Nie and Sharan Narang and Sharath Raparthy and Sheng Shen and Shengye Wan and Shruti Bhosale and Shun Zhang and Simon Vandenhende and Soumya Batra and Spencer Whitman and Sten Sootla and Stephane Collot and Suchin Gururangan and Sydney Borodinsky and Tamar Herman and Tara Fowler and Tarek Sheasha and Thomas Georgiou and Thomas Scialom and Tobias Speckbacher and Todor Mihaylov and Tong Xiao and Ujjwal Karn and Vedanuj Goswami and Vibhor Gupta and Vignesh Ramanathan and Viktor Kerkez and Vincent Gonguet and Virginie Do and Vish Vogeti and Vítor Albiero and Vladan Petrovic and Weiwei Chu and Wenhan Xiong and Wenyin Fu and Whitney Meers and Xavier Martinet and Xiaodong Wang and Xiaofang Wang and Xiaoqing Ellen Tan and Xide Xia and Xinfeng Xie and Xuchao Jia and Xuewei Wang and Yaelle Goldschlag and Yashesh Gaur and Yasmine Babaei and Yi Wen and Yiwen Song and Yuchen Zhang and Yue Li and Yuning Mao and Zacharie Delpierre Coudert and Zheng Yan and Zhengxing Chen and Zoe Papakipos and Aaditya Singh and Aayushi Srivastava and Abha Jain and Adam Kelsey and Adam Shajnfeld and Adithya Gangidi and Adolfo Victoria and Ahuva Goldstand and Ajay Menon and Ajay Sharma and Alex Boesenberg and Alexei Baevski and Allie Feinstein and Amanda Kallet and Amit Sangani and Amos Teo and Anam Yunus and Andrei Lupu and Andres Alvarado and Andrew Caples and Andrew Gu and Andrew Ho and Andrew Poulton and Andrew Ryan and Ankit Ramchandani and Annie Dong and Annie Franco and Anuj Goyal and Aparajita Saraf and Arkabandhu Chowdhury and Ashley Gabriel and Ashwin Bharambe and Assaf Eisenman and Azadeh Yazdan and Beau James and Ben Maurer and Benjamin Leonhardi and Bernie Huang and Beth Loyd and Beto De Paola and Bhargavi Paranjape and Bing Liu and Bo Wu and Boyu Ni and Braden Hancock and Bram Wasti and Brandon Spence and Brani Stojkovic and Brian Gamido and Britt Montalvo and Carl Parker and Carly Burton and Catalina Mejia and Ce Liu and Changhan Wang and Changkyu Kim and Chao Zhou and Chester Hu and Ching-Hsiang Chu and Chris Cai and Chris Tindal and Christoph Feichtenhofer and Cynthia Gao and Damon Civin and Dana Beaty and Daniel Kreymer and Daniel Li and David Adkins and David Xu and Davide Testuggine and Delia David and Devi Parikh and Diana Liskovich and Didem Foss and Dingkang Wang and Duc Le and Dustin Holland and Edward Dowling and Eissa Jamil and Elaine Montgomery and Eleonora Presani and Emily Hahn and Emily Wood and Eric-Tuan Le and Erik Brinkman and Esteban Arcaute and Evan Dunbar and Evan Smothers and Fei Sun and Felix Kreuk and Feng Tian and Filippos Kokkinos and Firat Ozgenel and Francesco Caggioni and Frank Kanayet and Frank Seide and Gabriela Medina Florez and Gabriella Schwarz and Gada Badeer and Georgia Swee and Gil Halpern and Grant Herman and Grigory Sizov and Guangyi and Zhang and Guna Lakshminarayanan and Hakan Inan and Hamid Shojanazeri and Han Zou and Hannah Wang and Hanwen Zha and Haroun Habeeb and Harrison Rudolph and Helen Suk and Henry Aspegren and Hunter Goldman and Hongyuan Zhan and Ibrahim Damlaj and Igor Molybog and Igor Tufanov and Ilias Leontiadis and Irina-Elena Veliche and Itai Gat and Jake Weissman and James Geboski and James Kohli and Janice Lam and Japhet Asher and Jean-Baptiste Gaya and Jeff Marcus and Jeff Tang and Jennifer Chan and Jenny Zhen and Jeremy Reizenstein and Jeremy Teboul and Jessica Zhong and Jian Jin and Jingyi Yang and Joe Cummings and Jon Carvill and Jon Shepard and Jonathan McPhie and Jonathan Torres and Josh Ginsburg and Junjie Wang and Kai Wu and Kam Hou U and Karan Saxena and Kartikay Khandelwal and Katayoun Zand and Kathy Matosich and Kaushik Veeraraghavan and Kelly Michelena and Keqian Li and Kiran Jagadeesh and Kun Huang and Kunal Chawla and Kyle Huang and Lailin Chen and Lakshya Garg and Lavender A and Leandro Silva and Lee Bell and Lei Zhang and Liangpeng Guo and Licheng Yu and Liron Moshkovich and Luca Wehrstedt and Madian Khabsa and Manav Avalani and Manish Bhatt and Martynas Mankus and Matan Hasson and Matthew Lennie and Matthias Reso and Maxim Groshev and Maxim Naumov and Maya Lathi and Meghan Keneally and Miao Liu and Michael L. Seltzer and Michal Valko and Michelle Restrepo and Mihir Patel and Mik Vyatskov and Mikayel Samvelyan and Mike Clark and Mike Macey and Mike Wang and Miquel Jubert Hermoso and Mo Metanat and Mohammad Rastegari and Munish Bansal and Nandhini Santhanam and Natascha Parks and Natasha White and Navyata Bawa and Nayan Singhal and Nick Egebo and Nicolas Usunier and Nikhil Mehta and Nikolay Pavlovich Laptev and Ning Dong and Norman Cheng and Oleg Chernoguz and Olivia Hart and Omkar Salpekar and Ozlem Kalinli and Parkin Kent and Parth Parekh and Paul Saab and Pavan Balaji and Pedro Rittner and Philip Bontrager and Pierre Roux and Piotr Dollar and Polina Zvyagina and Prashant Ratanchandani and Pritish Yuvraj and Qian Liang and Rachad Alao and Rachel Rodriguez and Rafi Ayub and Raghotham Murthy and Raghu Nayani and Rahul Mitra and Rangaprabhu Parthasarathy and Raymond Li and Rebekkah Hogan and Robin Battey and Rocky Wang and Russ Howes and Ruty Rinott and Sachin Mehta and Sachin Siby and Sai Jayesh Bondu and Samyak Datta and Sara Chugh and Sara Hunt and Sargun Dhillon and Sasha Sidorov and Satadru Pan and Saurabh Mahajan and Saurabh Verma and Seiji Yamamoto and Sharadh Ramaswamy and Shaun Lindsay and Shaun Lindsay and Sheng Feng and Shenghao Lin and Shengxin Cindy Zha and Shishir Patil and Shiva Shankar and Shuqiang Zhang and Shuqiang Zhang and Sinong Wang and Sneha Agarwal and Soji Sajuyigbe and Soumith Chintala and Stephanie Max and Stephen Chen and Steve Kehoe and Steve Satterfield and Sudarshan Govindaprasad and Sumit Gupta and Summer Deng and Sungmin Cho and Sunny Virk and Suraj Subramanian and Sy Choudhury and Sydney Goldman and Tal Remez and Tamar Glaser and Tamara Best and Thilo Koehler and Thomas Robinson and Tianhe Li and Tianjun Zhang and Tim Matthews and Timothy Chou and Tzook Shaked and Varun Vontimitta and Victoria Ajayi and Victoria Montanez and Vijai Mohan and Vinay Satish Kumar and Vishal Mangla and Vlad Ionescu and Vlad Poenaru and Vlad Tiberiu Mihailescu and Vladimir Ivanov and Wei Li and Wenchen Wang and Wenwen Jiang and Wes Bouaziz and Will Constable and Xiaocheng Tang and Xiaojian Wu and Xiaolan Wang and Xilun Wu and Xinbo Gao and Yaniv Kleinman and Yanjun Chen and Ye Hu and Ye Jia and Ye Qi and Yenda Li and Yilin Zhang and Ying Zhang and Yossi Adi and Youngjin Nam and Yu and Wang and Yu Zhao and Yuchen Hao and Yundi Qian and Yunlu Li and Yuzi He and Zach Rait and Zachary DeVito and Zef Rosnbrick and Zhaoduo Wen and Zhenyu Yang and Zhiwei Zhao and Zhiyu Ma},
      year={2024},
      eprint={2407.21783},
      archivePrefix={arXiv},
      primaryClass={cs.AI},
      url={https://doi.org/10.48550/arXiv.2407.21783}, 
}

@misc{biderman2023pythia,
      title={Pythia: A Suite for Analyzing Large Language Models Across Training and Scaling}, 
      author={Stella Biderman and Hailey Schoelkopf and Quentin Anthony and Herbie Bradley and Kyle O'Brien and Eric Hallahan and Mohammad Aflah Khan and Shivanshu Purohit and USVSN Sai Prashanth and Edward Raff and Aviya Skowron and Lintang Sutawika and Oskar van der Wal},
      year={2023},
      eprint={2304.01373},
      archivePrefix={arXiv},
      primaryClass={cs.CL},
      url={https://doi.org/10.48550/arXiv.2304.01373}, 
}

@book{foucart2013compressivesensing,
  author    = {Simon Foucart and Holger Rauhut},
  title     = {A Mathematical Introduction to Compressive Sensing},
  series    = {Applied and Numerical Harmonic Analysis},
  publisher = {Birkh{\"a}user},
  address   = {New York},
  year      = {2013},
  url       = {https://doi.org/10.1007/978-0-8176-4948-7},
}

@book{wainwright2019highdimensional,
  title={High-Dimensional Statistics: A Non-Asymptotic Viewpoint},
  author={Wainwright, Martin J.},
  series={Cambridge Series in Statistical and Probabilistic Mathematics},
  year={2019},
  publisher={Cambridge University Press},
  url={https://doi.org/10.1017/9781108627771}
}

@misc{bussmann2024batchtopk,
      title={{BatchTopK} Sparse Autoencoders}, 
      author={Bart Bussmann and Patrick Leask and Neel Nanda},
      year={2024},
      eprint={2412.06410},
      archivePrefix={arXiv},
      primaryClass={cs.LG},
      url={https://doi.org/10.48550/arXiv.2412.06410}, 
}

@misc{ayonrinde2024feature,
      title={Adaptive Sparse Allocation with Mutual Choice \& Feature Choice Sparse Autoencoders}, 
      author={Kola Ayonrinde},
      year={2024},
      eprint={2411.02124},
      archivePrefix={arXiv},
      primaryClass={cs.LG},
      url={https://doi.org/10.48550/arXiv.2411.02124}, 
}

@misc{lee2025evaluating,
      title={Evaluating and Designing Sparse Autoencoders by Approximating Quasi-Orthogonality}, 
      author={Sewoong Lee and Adam Davies and Marc E. Canby and Julia Hockenmaier},
      year={2025},
      eprint={2503.24277},
      archivePrefix={arXiv},
      primaryClass={cs.LG},
      url={https://doi.org/10.48550/arXiv.2503.24277}, 
}

@misc{chanin2025absorption,
      title={A is for Absorption: Studying Feature Splitting and Absorption in Sparse Autoencoders}, 
      author={David Chanin and James Wilken-Smith and Tomáš Dulka and Hardik Bhatnagar and Satvik Golechha and Joseph Bloom},
      year={2025},
      eprint={2409.14507},
      archivePrefix={arXiv},
      primaryClass={cs.CL},
      url={https://doi.org/10.48550/arXiv.2409.14507}, 
}

@article{tibshirani1996lasso,
  title   = {Regression Shrinkage and Selection via the {L}asso},
  author  = {Tibshirani, Robert},
  journal = {Journal of the Royal Statistical Society: Series B (Methodological)},
  volume  = {58},
  number  = {1},
  pages   = {267--288},
  year    = {1996},
  url     = {https://doi.org/10.1111/j.2517-6161.1996.tb02080.x},
}

@article{zhao2006consistency,
  author  = {Peng Zhao and Bin Yu},
  title   = {On Model Selection Consistency of {L}asso},
  journal = {Journal of Machine Learning Research},
  year    = {2006},
  volume  = {7},
  number  = {90},
  pages   = {2541--2563},
  url     = {http://jmlr.org/papers/v7/zhao06a.html}
}

@article{zou2005elasticnet,
    author = {Zou, Hui and Hastie, Trevor},
    title = {Regularization and variable selection via the elastic net},
    journal = {Journal of the Royal Statistical Society: Series B (Statistical Methodology)},
    volume = {67},
    number = {2},
    pages = {301-320},
    url = {https://doi.org/10.1111/j.1467-9868.2005.00503.x},
    year = {2005}
}

@article{zou2006adaptivelasso,
    author = {Hui Zou},
    title = {The Adaptive {L}asso and Its Oracle Properties},
    journal = {Journal of the American Statistical Association},
    volume = {101},
    number = {476},
    pages = {1418--1429},
    year = {2006},
    publisher = {Taylor \& Francis},
    url = {https://doi.org/10.1198/016214506000000735},
}

@article{mairal2010online,
  title={Online learning for matrix factorization and sparse coding},
  author={Mairal, Julien and Bach, Francis and Ponce, Jean and Sapiro, Guillermo},
  journal={Journal of Machine Learning Research},
  volume={11},
  number={1},
  year={2010},
  url     = {http://jmlr.org/papers/v11/mairal10a.html}
}

@article{zou2009adaptiveelasticnet,
  author  = {Zou, Hui and Zhang, Hao Helen},
  title   = {On the adaptive elastic-net with a diverging number of parameters},
  journal = {The Annals of Statistics},
  year    = {2009},
  volume  = {37},
  number  = {4},
  pages   = {1733--1751},
  url     = {https://doi.org/10.1214/08-AOS625}
}

@article{hoffman1952approximate,
  title={On Approximate Solutions of Systems of Linear Inequalities},
  author={Hoffman, Alan J.},
  journal={Journal of Research of the National Bureau of Standards},
  volume={49},
  number={4},
  pages={263--265},
  year={1952},
  url={https://nvlpubs.nist.gov/nistpubs/jres/049/4/v49.n04.a05.pdf}
}

@article{robinson1981polyhedral,
  title={Some continuity properties of polyhedral multifunctions},
  author={Robinson, Stephen M.},
  journal={Mathematical Programming at Oberwolfach},
  volume={14},
  pages={206--214},
  year={1981},
  publisher={Springer Berlin Heidelberg},
  url={https://doi.org/10.1007/BFb0120929}
}

@inproceedings{gregor2010lista,
  title     = {Learning Fast Approximations of Sparse Coding},
  author    = {Gregor, Karol and LeCun, Yann},
  booktitle = {Proceedings of the 27th International Conference on Machine Learning},
  series    = {ICML},
  year      = {2010},
  url       = {https://icml.cc/Conferences/2010/papers/449.pdf}
}

@misc{chen2021lista,
      title={Hyperparameter Tuning is All You Need for {LISTA}}, 
      author={Xiaohan Chen and Jialin Liu and Zhangyang Wang and Wotao Yin},
      year={2021},
      eprint={2110.15900},
      archivePrefix={arXiv},
      primaryClass={cs.LG},
      url={https://doi.org/10.48550/arXiv.2110.15900}, 
}

@article{piantadosi2014zipf,
  title   = {Zipf's Word Frequency Law in Natural Language: A Critical Review and Future Directions},
  author  = {Piantadosi, Steven T.},
  journal = {Psychonomic Bulletin \& Review},
  volume  = {21},
  number  = {5},
  pages   = {1112--1130},
  year    = {2014},
  url     = {https://doi.org/10.3758/s13423-014-0585-6}
}

@misc{bartlett2017spectral,
      title={Spectrally-normalized margin bounds for neural networks}, 
      author={Peter Bartlett and Dylan J. Foster and Matus Telgarsky},
      year={2017},
      eprint={1706.08498},
      archivePrefix={arXiv},
      primaryClass={cs.LG},
      url={https://doi.org/10.48550/arXiv.1706.08498}, 
}

@misc{neyshabur2017exploring,
      title={Exploring Generalization in Deep Learning}, 
      author={Behnam Neyshabur and Srinadh Bhojanapalli and David McAllester and Nathan Srebro},
      year={2017},
      eprint={1706.08947},
      archivePrefix={arXiv},
      primaryClass={cs.LG},
      url={https://doi.org/10.48550/arXiv.1706.08947}, 
}

@misc{keskar2017large,
      title={On Large-Batch Training for Deep Learning: Generalization Gap and Sharp Minima}, 
      author={Nitish Shirish Keskar and Dheevatsa Mudigere and Jorge Nocedal and Mikhail Smelyanskiy and Ping Tak Peter Tang},
      year={2017},
      eprint={1609.04836},
      archivePrefix={arXiv},
      primaryClass={cs.LG},
      url={https://doi.org/10.48550/arXiv.1609.04836}, 
}

@article{hochreiter1997flat,
  title   = {Flat Minima},
  author  = {Hochreiter, Sepp and Schmidhuber, J{\"u}rgen},
  journal = {Neural Computation},
  volume  = {9},
  number  = {1},
  pages   = {1--42},
  year    = {1997},
  url     = {https://doi.org/10.1162/neco.1997.9.1.1}
}

@misc{nanda2022glossary,
  title={A Comprehensive Mechanistic Interpretability Explainer \& Glossary},
  author={Nanda, Neel},
  year={2022},
  url={https://www.neelnanda.io/mechanistic-interpretability/glossary}
}

@misc{li2024optimal,
      title={Optimal ablation for interpretability}, 
      author={Maximilian Li and Lucas Janson},
      year={2024},
      eprint={2409.09951},
      archivePrefix={arXiv},
      primaryClass={cs.LG},
      url={https://doi.org/10.48550/arXiv.2409.09951}, 
}

@misc{pochinkov2024investigating,
      title={Investigating Neuron Ablation in Attention Heads: The Case for Peak Activation Centering}, 
      author={Nicholas Pochinkov and Ben Pasero and Skylar Shibayama},
      year={2024},
      eprint={2408.17322},
      archivePrefix={arXiv},
      primaryClass={cs.LG},
      url={https://doi.org/10.48550/arXiv.2408.17322}, 
}

@misc{su2023rope,
      title={{RoFormer}: Enhanced Transformer with Rotary Position Embedding}, 
      author={Jianlin Su and Yu Lu and Shengfeng Pan and Ahmed Murtadha and Bo Wen and Yunfeng Liu},
      year={2023},
      eprint={2104.09864},
      archivePrefix={arXiv},
      primaryClass={cs.CL},
      url={https://doi.org/10.48550/arXiv.2104.09864}, 
}

@misc{gao2020pile800gbdatasetdiverse,
      title={The {P}ile: An 800GB Dataset of Diverse Text for Language Modeling}, 
      author={Leo Gao and Stella Biderman and Sid Black and Laurence Golding and Travis Hoppe and Charles Foster and Jason Phang and Horace He and Anish Thite and Noa Nabeshima and Shawn Presser and Connor Leahy},
      year={2020},
      eprint={2101.00027},
      archivePrefix={arXiv},
      primaryClass={cs.CL},
      url={https://doi.org/10.48550/arXiv.2101.00027}, 
}

@book{rockafellar1998variational,
  title={Variational analysis},
  author={Rockafellar, Tyrrell and Wets, Roger},
  year={1998},
  publisher={Springer},
  url={https://doi.org/10.1007/978-3-642-02431-3}
}

@misc{liu2025olmotracetracinglanguagemodel,
      title={{OLMoTrace}: Tracing Language Model Outputs Back to Trillions of Training Tokens}, 
      author={Jiacheng Liu and Taylor Blanton and Yanai Elazar and Sewon Min and YenSung Chen and Arnavi Chheda-Kothary and Huy Tran and Byron Bischoff and Eric Marsh and Michael Schmitz and Cassidy Trier and Aaron Sarnat and Jenna James and Jon Borchardt and Bailey Kuehl and Evie Cheng and Karen Farley and Sruthi Sreeram and Taira Anderson and David Albright and Carissa Schoenick and Luca Soldaini and Dirk Groeneveld and Rock Yuren Pang and Pang Wei Koh and Noah A. Smith and Sophie Lebrecht and Yejin Choi and Hannaneh Hajishirzi and Ali Farhadi and Jesse Dodge},
      year={2025},
      eprint={2504.07096},
      archivePrefix={arXiv},
      primaryClass={cs.CL},
      url={https://doi.org/10.48550/arXiv.2504.07096}, 
}

@misc{yang2022tensorprogramsvtuning,
      title={Tensor Programs V: Tuning Large Neural Networks via Zero-Shot Hyperparameter Transfer}, 
      author={Greg Yang and Edward J. Hu and Igor Babuschkin and Szymon Sidor and Xiaodong Liu and David Farhi and Nick Ryder and Jakub Pachocki and Weizhu Chen and Jianfeng Gao},
      year={2022},
      eprint={2203.03466},
      archivePrefix={arXiv},
      primaryClass={cs.LG},
      url={https://doi.org/10.48550/arXiv.2203.03466}, 
}

@misc{fan2008sureindependencescreeningultrahigh,
      title={Sure Independence Screening for Ultra-High Dimensional Feature Space}, 
      author={Jianqing Fan and Jinchi Lv},
      year={2008},
      eprint={math/0612857},
      archivePrefix={arXiv},
      primaryClass={math.ST},
      url={https://doi.org/10.48550/arXiv.math/0612857}, 
}

@misc{lieberum2024gemma,
      title={Gemma Scope: Open Sparse Autoencoders Everywhere All At Once on Gemma 2}, 
      author={Tom Lieberum and Senthooran Rajamanoharan and Arthur Conmy and Lewis Smith and Nicolas Sonnerat and Vikrant Varma and János Kramár and Anca Dragan and Rohin Shah and Neel Nanda},
      year={2024},
      eprint={2408.05147},
      archivePrefix={arXiv},
      primaryClass={cs.LG},
      url={https://doi.org/10.48550/arXiv.2408.05147}, 
}

@misc{rajamanoharan2024jumping,
      title={Jumping Ahead: Improving Reconstruction Fidelity with JumpReLU Sparse Autoencoders}, 
      author={Senthooran Rajamanoharan and Tom Lieberum and Nicolas Sonnerat and Arthur Conmy and Vikrant Varma and János Kramár and Neel Nanda},
      year={2024},
      eprint={2407.14435},
      archivePrefix={arXiv},
      primaryClass={cs.LG},
      url={https://doi.org/10.48550/arXiv.2407.14435}, 
}

@misc{bussmann2025matryoshka,
      title={Learning Multi-Level Features with Matryoshka Sparse Autoencoders}, 
      author={Bart Bussmann and Noa Nabeshima and Adam Karvonen and Neel Nanda},
      year={2025},
      eprint={2503.17547},
      archivePrefix={arXiv},
      primaryClass={cs.LG},
      url={https://doi.org/10.48550/arXiv.2503.17547}, 
}

@misc{martinlinares2025attribution,
      title={Attribution-Guided Distillation of Matryoshka Sparse Autoencoders}, 
      author={Cristina P. Martin-Linares and Jonathan P. Ling},
      year={2025},
      eprint={2512.24975},
      archivePrefix={arXiv},
      primaryClass={cs.LG},
      url={https://doi.org/10.48550/arXiv.2512.24975}, 
}


\clearpage
\appendix

\section{Theoretical Guarantees: Polyhedral Stability and Oracle Consistency}
\label{sec:theory}

In this section, we formalize the statistical and stability guarantees of the AEN-SAE. For notational simplicity during the subsequent matrix calculus derivations, we assume the input residual stream vector $x$ has been precentered by the explicit decoder bias ($x \leftarrow x - b_{\text{dec}}$). This allows the reconstruction error to be written more cleanly as $\|x - Dh\|_2^2 / 2$ without loss of generality.

\subsection{Polyhedral and Lipschitz Stability}

\begin{theorem}[Active Curvature Lower Bound]
\label{thm:stability}
Fix a latent representation $h\in\mathbb{R}_{+}^{d_{\text{dict}}}$ and define its active features by $\mathcal A:=\{i:h_i \ne 0\}$. Treating the adaptive weights as fixed,
the Hessian with respect to the active latent representations $\nabla^2_{h_{\mathcal{A}}} \mathcal{L}_{AEN}$ satisfies:
\begin{equation*}
    \lambda_{\min}(\nabla^2_{h_{\mathcal{A}}} \mathcal{L}_{AEN}) = \lambda_{\min}(D_{\mathcal{A}}^\top D_{\mathcal{A}}) + 2\lambda_2 \ge 2\lambda_2 > 0.
\end{equation*}
\end{theorem}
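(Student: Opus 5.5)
The plan is to compute the Hessian of the per-sample AEN objective directly, restricted to the active coordinates, and observe that it is a constant matrix. With $x$ precentered and the weights $w_i$ held fixed, the per-sample loss is
\[
\ell(h) = \tfrac12\|x - Dh\|_2^2 + \lambda_1\sum_i w_i |h_i| + \lambda_2\|h\|_2^2 .
\]
Since $h\in\mathbb{R}_+^{d_{\text{dict}}}$ and $\mathcal A=\{i:h_i\neq 0\}$, every active coordinate satisfies $h_i>0$. On a neighbourhood of $h_{\mathcal A}$, with the inactive coordinates frozen at zero, the weighted $\ell_1$ term therefore equals $\lambda_1\sum_{i\in\mathcal A} w_i h_i$. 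This is linear, so it is smooth and contributes nothing to the second derivative. This localisation is the only point needing care, because $|\cdot|$ is nondifferentiable precisely on $\mathcal A^c$. Differentiating with respect to $h_{\mathcal A}$ alone keeps us away from the kink.

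Next, write $Dh = D_{\mathcal A}h_{\mathcal A}$ (the inactive columns multiply zeros). The gradient in $h_{\mathcal A}$ is
\[
-D_{\mathcal A}^\top(x - D_{\mathcal A}h_{\mathcal A}) + \lambda_1 w_{\mathcal A} + 2\lambda_2 h_{\mathcal A},
\]
so the Hessian is $D_{\mathcal A}^\top D_{\mathcal A} + 2\lambda_2 I_{|\mathcal A|}$. Taking the expectation over $x$ in $\mathcal L_{AEN}$ does not change this, since the Hessian does not depend on $x$. For the amortized reading, $h=h(x)$ with $\mathcal A$ treated as fixed, the interpretation is the same: we differentiate in the latent variables.

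Finally, $D_{\mathcal A}^\top D_{\mathcal A}$ is symmetric, and adding $cI$ shifts every eigenvalue by $c$. Hence $\lambda_{\min}$ of the Hessian equals $\lambda_{\min}(D_{\mathcal A}^\top D_{\mathcal A})+2\lambda_2$. The Gram matrix is positive semidefinite because $v^\top D_{\mathcal A}^\top D_{\mathcal A}v=\|D_{\mathcal A}v\|_2^2\ge 0$, so its minimum eigenvalue is nonnegative. This gives the lower bound $2\lambda_2$, which is strictly positive since $\lambda_2>0$.

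The main obstacle is conceptual rather than computational. We must justify treating the $\ell_1$ term as linear on the active block, which uses the sign constancy given by $h\ge 0$. We must also be explicit that "treating the adaptive weights as fixed" removes any dependence of $w$ on $h$ through the EMA. If $\mathcal A$ is empty, the statement is vacuous.
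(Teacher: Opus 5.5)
Your proposal is correct and follows essentially the same route as the paper: restrict to the active block, note that the weighted $\ell_1$ term is locally linear there (the paper's version is that $\lambda_1 W_{\mathcal{A}}\,\mathrm{sign}(h_{\mathcal{A}})$ is constant on the active orthant), obtain the Hessian $D_{\mathcal{A}}^\top D_{\mathcal{A}} + 2\lambda_2 I_{|\mathcal{A}|}$, and conclude from the eigenvalue shift and the positive semidefiniteness of the Gram matrix. Your added remarks on the expectation over $x$, the fixed weights, and the empty active set are harmless refinements that the paper leaves implicit.
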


\begin{proof}
For a fixed active set $\mathcal{A}$, the AEN-SAE objective evaluates to:
\begin{equation*}
    \mathcal{L}_{\mathcal{A}}(h_{\mathcal{A}}) = \frac{1}{2} \|x - D_{\mathcal{A}} h_{\mathcal{A}}\|_2^2 + \lambda_1 \sum_{i \in \mathcal{A}} w_i |h_i| + \lambda_2 \|h_{\mathcal{A}}\|_2^2.
\end{equation*}

To compute the Hessian, we first derive the gradient with respect to the active latent vector $h_{\mathcal{A}}$. Since the absolute value function is locally linear, and its derivative is strictly defined by the sign function, the standard matrix calculus gives the gradient given by:
\begin{equation*}
    \nabla_{h_{\mathcal{A}}} \mathcal{L}_{\mathcal{A}} = \underbrace{-D_{\mathcal{A}}^\top (x - D_{\mathcal{A}} h_{\mathcal{A}})}_{\text{Reconstruction gradient}} + \underbrace{\lambda_1 W_{\mathcal{A}} \text{sign}(h_{\mathcal{A}})}_{\ell_1 \text{ gradient}} + \underbrace{2\lambda_2 h_{\mathcal{A}}}_{\ell_2 \text{ gradient}},
\end{equation*}
where $W_{\mathcal{A}}$ is a diagonal matrix of the active adaptive weights.

Taking the derivative of this gradient with respect to $h_{\mathcal{A}}$ yields the Hessian. Distributing the transpose in the reconstruction term yields a constant $D_{\mathcal{A}}^\top D_{\mathcal{A}}$ component. Because the $\ell_1$ gradient term ($\lambda_1 W_{\mathcal{A}} \cdot \text{sign}(h_{\mathcal{A}})$) is constant within this active orthant, its second derivative vanishes. The Hessian is therefore derived from the remaining quadratic and $\ell_2$ terms:
\begin{equation*}
    \nabla^2_{h_{\mathcal{A}}} \mathcal{L}_{\mathcal{A}} = D_{\mathcal{A}}^\top D_{\mathcal{A}} + 2\lambda_2 I_{|\mathcal{A}|}.
\end{equation*}

Applying the Rayleigh quotient formula, the minimum eigenvalue is:
\begin{equation*}
    \lambda_{\min}(D_{\mathcal{A}}^\top D_{\mathcal{A}} + 2\lambda_2 I_{|\mathcal{A}|}) = \lambda_{\min}(D_{\mathcal{A}}^\top D_{\mathcal{A}}) + 2\lambda_2.
\end{equation*}

Since the Gram matrix $D_{\mathcal{A}}^\top D_{\mathcal{A}}$ is positive semi-definite by definition ($\lambda_{\min}(D_{\mathcal{A}}^\top D_{\mathcal{A}}) \ge 0$), the Hessian's eigenvalues are strictly bounded below by $2\lambda_2$, which completes the proof.
\end{proof}

As established in the main text, the Hoffman constant for the active polyhedra scales inversely with the minimum eigenvalue of the active Gram matrix. Theorem~\ref{thm:stability} explicitly guarantees that the Hoffman bound is capped by $\mathcal{O}(1/2\lambda_2)$. This prevents the activation polyhedra from collapsing into degenerate valleys, avoiding a key geometric source of instability associated with feature starvation.

Furthermore, this $\ell_2$ curvature has a secondary desirable effect in deep learning generalization. By avoiding discontinuous ravines, the $\ell_2$ penalty forces the optimization to converge toward broader, flatter local minima in the representation space~\citep{hochreiter1997flat, keskar2017large}. As we will formalize in Theorem~\ref{thm:lipschitz}, this flat geometry explicitly bounds the Lipschitz constant of the target mapping. Classical statistical learning theory dictates that such Lipschitz regularization is strictly necessary to close the generalization gap, ensuring the learned dictionary maintains high representational fidelity on unseen text~\citep{bartlett2017spectral, neyshabur2017exploring}.

\begin{theorem}[Lipschitz Stability of the Latent Encoder Mapping]
\label{thm:lipschitz}
Let $h^{\dagger}(x)$ be the optimal sparse code for an input $x$ under the AEN-SAE objective:
\begin{equation*}
    h^{\dagger}(x)=\argmin_{h(x)\in\mathbb{R}^{d_{\text{dict}}}_{+}}
    \left[ \frac{1}{2} \|x - D h(x) \|_2^2 + \lambda_1 \sum_{i=1}^{d_{\text{dict}}} w_i |h_i(x)| + \lambda_2 \|h(x)\|_2^2 \right],
\end{equation*}
where the adaptive weights are treated as fixed.
Then the target mapping from the residual stream $x$ to the latent space $h$ is Lipschitz continuous with constant $L \le 1/(2\sqrt{2\lambda_2})$:
\begin{equation*}
    \|h^{\dagger}(x_1) - h^{\dagger}(x_2)\|_2 \le \frac{1}{2\sqrt{2\lambda_2}} \|x_1 - x_2\|_2.
\end{equation*}
\end{theorem}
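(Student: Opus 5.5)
We will argue via strong monotonicity of the subdifferential, avoiding any case analysis over active sets; this is what produces the exact constant $1/(2\sqrt{2\lambda_2})$ rather than a looser bound. With the adaptive weights fixed, we split the objective as $\frac12\|x-Dh\|_2^2 + G(h)$. Here
\begin{equation*}
G(h) = \lambda_1\sum_i w_i|h_i| + \lambda_2\|h\|_2^2 + \iota_{\mathbb{R}_+^{d_{\text{dict}}}}(h),
\end{equation*}
and $\iota$ is the indicator of the nonnegative orthant. The key observation is that $G$ is a sum of convex functions plus $\lambda_2\|h\|_2^2$. Hence $G$ is $2\lambda_2$-strongly convex, and its subdifferential is $2\lambda_2$-strongly monotone:
\begin{equation*}
\langle u_1-u_2,\,h_1-h_2\rangle \ge 2\lambda_2\|h_1-h_2\|_2^2 \quad\text{for all } u_j\in\partial G(h_j).
\end{equation*}
This is the global counterpart of the curvature bound in Theorem~\ref{thm:stability}. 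It also yields existence and uniqueness of $h^\dagger(x)$, because the full objective is strongly convex, lower semicontinuous and coercive.

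Next we write the first-order optimality condition. Since the quadratic term is smooth, the subdifferential sum rule applies and gives $u_j := D^\top(x_j - Dh_j)\in\partial G(h_j)$ for $h_j = h^\dagger(x_j)$, $j=1,2$. Set $\Delta h = h_1-h_2$ and $\Delta x = x_1-x_2$. Then
\begin{equation*}
u_1-u_2 = D^\top\Delta x - D^\top D\,\Delta h.
\end{equation*}
Substituting this into the monotonicity inequality gives
\begin{equation*}
\langle \Delta x, D\Delta h\rangle - \|D\Delta h\|_2^2 \ge 2\lambda_2\|\Delta h\|_2^2 .
\end{equation*}

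Finally, let $a := \|D\Delta h\|_2$. By Cauchy--Schwarz,
\begin{equation*}
2\lambda_2\|\Delta h\|_2^2 \le a\|\Delta x\|_2 - a^2 .
\end{equation*}
The right-hand side is a concave quadratic in $a$, maximized at $a=\|\Delta x\|_2/2$, so it is at most $\|\Delta x\|_2^2/4$. Therefore $\|\Delta h\|_2^2 \le \|\Delta x\|_2^2/(8\lambda_2)$, that is,
\begin{equation*}
\|\Delta h\|_2 \le \frac{\|\Delta x\|_2}{2\sqrt{2\lambda_2}} .
\end{equation*}
This is exactly the claimed constant, and it holds independently of $D$, which matches the message that the $\ell_2$ anchor alone caps the stability constant.

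The main obstacle is the nonsmooth step: justifying the optimality condition and the strong monotonicity in the presence of the nonnegativity constraint and the kink of $|\cdot|$ at zero. We will handle it by working with the convex subdifferential throughout rather than with the sign function used in Theorem~\ref{thm:stability}. The facts needed are standard: the sum rule holds because one summand is differentiable everywhere (Rockafellar--Wets), and the subdifferential of a $\mu$-strongly convex function is $\mu$-strongly monotone. With these in hand, the remaining steps are purely algebraic.
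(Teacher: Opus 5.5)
Your proposal is correct and follows the paper's proof essentially step for step: both write the first-order optimality conditions at $x_1$ and $x_2$, subtract, pair with $\Delta h$, use monotonicity of the subdifferential, and finish with Cauchy--Schwarz and completing the square (your maximization of $a\|\Delta x\|_2-a^2$) to get $\|\Delta h\|_2^2\le\|\Delta x\|_2^2/(8\lambda_2)$. The only difference is bookkeeping: you absorb $\lambda_2\|h\|_2^2$ into $G$ and invoke $2\lambda_2$-strong monotonicity, whereas the paper keeps that term in the quadratic form $(D^\top D+2\lambda_2 I)$ and uses plain monotonicity of the $\ell_1$-plus-indicator subdifferential, which gives the identical inequality.
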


\begin{proof}
    Fix any $x_{1}\ne x_{2}$.
    Let $\delta_{h\ge 0}$ denote the indicator function such that $\delta_{h\ge 0}=0$ for $h\ge 0$ and $\delta_{h\ge 0}=\infty$ otherwise.
    The first order optimality condition (e.g.,~\citep[Theorem 8.15]{rockafellar1998variational}) implies that there exist 
    subgradients $s_{1}$ and $s_{2}$ of $\sum_{i=1}^{d_{dict}}w_{i}|h_{i}|+\delta_{h\ge 0}$ (with respect to $h=(h_{1},\ldots,h_{d_{dict}})^{\top}$) at $h^{\dagger}(x_{1})$ and $h^{\dagger}(x_{2})$ for which
    \begin{align*}
        D^{\top}\{Dh^{\dagger}(x_{1})-x_{1}\}+2\lambda_{2}h^{\dagger}(x_{1}) + \lambda_{1}s_{1} &= 0, \\
        D^{\top}\{Dh^{\dagger}(x_{2})-x_{2}\}+2\lambda_{2}h^{\dagger}(x_{2}) + \lambda_{1}s_{2} &= 0.
    \end{align*}
    These give
    \[
    (D^{\top}D+2\lambda_{2}I)\{h^{\dagger}(x_{1})-h^{\dagger}(x_{2})\}-D^{\top}(x_{1}-x_{2}) +\lambda_{1}(s_{1}-s_{2})=0.
    \]
    Taking the inner product with respect to $\Delta h:=h^{\dagger}(x_{1})-h^{\dagger}(x_{2})$ gives
    \[
    (\Delta h)^{\top}(D^{\top}D+2\lambda_{2}I)\Delta h-
    \langle D^{\top}(x_{1}-x_{2}), \Delta h)\rangle  
    +\lambda_{1}\langle s_{1}-s_{2}, \Delta h)\rangle=0.
    \]
    Since the convexity of $\sum_{i=1}^{d_{dict}}w_{i}|h_{i}|+\delta_{h\ge 0}$ with respect to $h$ implies
    the monotonicity of the subdifferential $\langle s_{1}-s_{2}, \Delta h)\rangle =\langle s_{1}-s_{2}, h^{\dagger}(x_{1})-h^{\dagger}(x_{2})\rangle \ge 0$ 
    (e.g.,~\citep[Theorem 12.17]{rockafellar1998variational}),
    this is further reduced to
    \[
    (\Delta h)^{\top}(D^{\top}D+2\lambda_{2}I)\Delta h
    - \langle D^{\top}(x_{1}-x_{2}), \Delta h)\rangle \le 0,
    \]
    and, together with the Cauchy--Schwarz inequality, we have
    \[
    \|D\Delta h\|^{2}_{2}+
    2\lambda_{2}\|\Delta h\|_{2}^{2}
    - \|x_{1}-x_{2}\|_{2} \|D \Delta h\|_{2} \le 0.
    \]
    Completing the square gives
    \[
    \{\|D \Delta h \|_{2}-\|x_{1}-x_{2}\|_{2}/2\}^{2} - \|x_{1}-x_{2}\|_{2}^{2}/4 + 2\lambda_{2}\|\Delta h\|_{2}^{2} \le 0,
    \]
    implying that $\|h^{\dagger}(x_{1})-h^{\dagger}(x_{2})\|_{2}^{2} \le \|x_{1}-x_{2}\|_{2}^{2}/(8\lambda_{2})$ and thus completing the proof.
\end{proof}

The bound established in Theorem~\ref{thm:lipschitz} exposes the mechanical failure of standard SAEs. In a standard $\ell_1$ framework ($\lambda_2 = 0$), 
the Lipshitz stability is governed by the inverse of the minimum singular value of the active dictionary $\sigma_{\min}(D_{\mathcal A})$. As mutual coherence increases (as in the spiked model~\citep{wainwright2019highdimensional}), $\sigma_{\min} \to 0$, causing the Hoffman constant of the solution map to diverge to infinity. 

Simultaneously, the expressive capacity of a shallow linear encoder $h(x) = \text{ReLU}(W_{\text{enc}}x + b_{\text{enc}})$ is strictly limited by the spectral norm of its weight matrix, $\|W_{\text{enc}}\|_2$. To successfully approximate the discontinuous $\ell_1$ target manifold, the optimization would require $\|W_{\text{enc}}\|_2 \to \infty$. Because gradient descent naturally constrains weight norms, the encoder structurally lacks the spectral capacity to track these discontinuities. Unable to map the target, the network effectively abandons the unstable features by pushing their pre-activations below the ReLU threshold, causing irreversible feature starvation. By bounding the target Lipschitz constant to $1/(2\sqrt{2\lambda_2})$, the AEN-SAE ensures the target manifold remains comfortably within the spectral capacity of $W_{\text{enc}}$.

Finally, the result of Theorem~\ref{thm:lipschitz} would suggest that the map becomes more stable and easier to learn as $\lambda_2$ is increased. While this is true, it should be noted that, when $\lambda_2$ is too high relative to $\lambda_1$, the loss will function closer to Ridge regression with a grouping effect rather than LASSO with feature selection. That is, representations will remain dense and not allow for any monosemantic interpretation. 

\subsection{Assumptions for Oracle Selection Consistency}

In classical high-dimensional statistics, standard $\ell_1$ recovery relies on the strict IRC~\citep{wainwright2019highdimensional, zhao2006consistency}, which assumes that noise features are largely orthogonal to true signal features. In overcomplete LLM dictionaries ($d_{\text{dict}} \gg d_{\text{model}}$), features are naturally highly collinear to capture polysemantic nuance. This inevitably violates the IRC, causing standard $\ell_1$ SAEs to collapse and misidentify features.

Further, the true active support is assumed to be static across the dataset in classical sparse regression. 
In contrast, LLM residual streams exhibit a dynamic \textit{local} sparsity, where 
for an encoder $h$,
the active features $\mathcal{A}(x):=\{i:|h_{i}(x)|>0\}$ change for every input token $x$. To evaluate the structural health of the dictionary, we define the \textit{global active support} $\mathcal{A}_{\text{global}} = \{i : \mathbb{E}_{x}[|h_i (x)|] > 0\}$ as the set of all statistically valid semantic concepts across the data distribution. Noise artifacts and dead features inherently belong to the inactive set (where $\mathbb{E}_{x}[|h_i (x)|] = 0$).

A primary theoretical advantage of the AEN-SAE is that it achieves oracle selection consistency for the global active support
without requiring the IRC~\citep{zou2006adaptivelasso, zou2009adaptiveelasticnet}. By conditioning the problem via the $\ell_2$ anchor and dynamically penalizing noise via adaptive weights, we reduce the theoretical requirements to two assumptions.
Let $\mathcal{A}^{*}_{\text{global}}$ be the true global active support.
Let $h^{*}:\mathbb{R}^{d_{\text{model}}}\to\mathbb{R}^{d_{\text{dict}}}$ be the oracle sparse encoder defined by
\[
h^{*}(x) = \argmin_{h(x)\in\mathbb{R}_{+}^{d_{\text{dict}}}:
h_{i}(x)=0, i\notin \mathcal{A}^{*}_{\text{global}}
}
\left[\frac{1}{2}\|x- Dh(x)\|_{2}^{2}
+\lambda_{2}\|h(x)\|_{2}^{2}
\right]
\]

\begin{assumption}[Signal Lower Bound]
\label{assum:signal}
The true active features have expected magnitudes strictly bounded away from zero: $\min_{i \in \mathcal{A}^*_{\text{global}}} \mathbb{E}_{x}[|h^*_i (x)|] \ge c > 0$.
\end{assumption}

Assumption~\ref{assum:signal} is quite mild, and in fact required for model identifiability to separate the active and inactive features in expectation. In mechanistic interpretability, a `true' feature represents a semantic concept that exerts a causal impact on downstream logit generation. Because transformer residual streams are high-dimensional and subject to LayerNorm, any feature with an infinitesimally small activation ($c \approx 0$) would be washed out by the baseline variance of the stream. Therefore, impactful concepts possess a macroscopic activation threshold bounded strictly away from zero.

\begin{assumption}[Timescale Separation]
\label{assum:timescale}
The dictionary updates occur on a sufficiently slow timescale relative to the EMA tracking such that, as $t \to \infty$, the EMA buffer converges to the true marginal expected activations: $\bar{h}_i^{(t)} \to \mathbb{E}_{x}[|h_i^* (x)|]$.
\end{assumption}

Assumption~\ref{assum:timescale} addresses the fundamental challenge of applying online, adaptive estimators to natural language, which follows a heavily skewed Zipfian (power-law) distribution~\citep{piantadosi2014zipf}. While raw syntactic features fire constantly, highly specific semantic features (e.g., proper nouns or niche factual knowledge) may only appear once every several million tokens. We satisfy the timescale separation requirement through two explicit mechanisms designed for this long-tail distribution. First, we hold the warmup schedule at $\rho(t) = 0$ for millions of tokens to allow the buffer to safely initialize. Second, we calibrate the EMA momentum $\beta$ such that the buffer update rate, defined as $\eta_{\text{ema}} := (1-\beta)$, is several orders of magnitude larger than the dictionary learning rate $\eta_{D}$. In our implementation, we observe empirical dictionary updates of $\mathcal{O}(10^{-7})$ per step, while our choice of $\beta=0.999$ yields a buffer rate of $10^{-3}$. Due to this, the dictionary remains quasistatic relative to the effective memory horizon, $\mathcal{H} := \frac{1}{1-\beta}$. Because the horizon $\mathcal{H} \times B$ (where $B$ is the batch size) spans tens of millions of tokens, the EMA converges to true marginals without artificially decaying healthy, rare semantic concepts to zero.

\begin{assumption}[Mild Decay Condition of the Penalties]
    \label{assum:neglibiblebiasterm}
    For the $\ell_{1}$ penalty $\lambda_{1}$
    and
    the dynamic adaptive penalty weights $w_{i}=w_{i}^{(t)}$,
    the factor $\max_{i\in\mathcal{A}^{*}_{\text{global}}}\lambda_{1}w_{i}^{(t)}$ becomes negligible as $t\to\infty$, while the factor $\min_{i\notin\mathcal{A}^{*}_{\text{global}}}\lambda_{1}w_{i}^{(t)}$ diverges as $t\to\infty$
\end{assumption}

Assumption \ref{assum:neglibiblebiasterm} is also quite mild as we discuss in Sections \ref{sec:aen} and \ref{subsec:stability-mechanisms}. In our practical online formulation, this corresponds to an asymptotic decay rate condition on $\lambda_{1}$ in the adaptive LASSO literature \citep{zou2006adaptivelasso}. As shown in the proof of
Theorem \ref{thm:oracle}, under Assumptions \ref{assum:signal}--\ref{assum:timescale},
the adaptive penalty weights satisfy the following behavior:
for $i\in\mathcal{A}^{*}_{\text{global}}$,
$w^{(t)}_{i}\to \mathbb{E}_{x}[|h^*_i(x)|]^{-\gamma}<c^{-\gamma}$,
whereas for $i\notin\mathcal{A}^{*}_{\text{global}}$, $w^{(t)}_{i}$ diverges. If the
weights are clipped to the interval $(w_{\min},w_{\max})$, then
$w^{(t)}_{i}$ reaches $w_{\max}$ for inactive coordinates. Thus, in practice,
the assumption is satisfied when $\lambda_{1}c^{-\gamma}\ll 1$ and $\lambda_{1}w_{\max}\gg 1$.

\subsection{Oracle Selection Consistency}

We now show that the idealized AEN-SAE targeted objective asymptotically identifies this global support and eliminates magnitude shrinkage for valid features, regardless of dictionary coherence.

\begin{theorem}[Global Oracle Selection and Shrinkage Elimination]
\label{thm:oracle}
Assume~\ref{assum:signal}-~\ref{assum:neglibiblebiasterm}. Let the idealized adaptive penalty weights be constructed as $\tilde{w}^{(t)}_i = (\bar{h}^{(t)}_i)^{-\gamma}$. Then, the AEN-SAE update
\begin{equation*}
    \hat{h}^{(t)} = \argmin_{h(x)\in \mathbb{R}_{+}^{d_{\text{dict}}}}\left[ \frac{1}{2} \|x - D h(x)\|_2^2 + \lambda_1 \sum_{i=1}^{d_{\text{dict}}} \tilde{w}^{(t)}_i h_i(x) + \lambda_2 \|h(x)\|_2^2 \right]
\end{equation*}
identifies the true global support with probability approaching 1. That is: under the assumption $\mathbb{E}_{x}[\|x\|_{2}^{2}]<\infty$, we have, 
for any threshold $\tau\in (0,c)$,
\begin{equation}
\label{eq: timescale identification}
    \lim_{t \to \infty} P\left(\{i : \mathbb{E}_{x}[\hat{h}^{(t)}_i(x)\mid \tilde{w}^{(t)}] > \tau \} = \mathcal{A}^*_{\text{global}}\right) = 1.
\end{equation}
Also, $\hat{h}^{(t)}$ eliminates the $\ell_{1}$-induced shrinkage: $\hat{h}^{(t)}(x)\to h^{*}(x)$.
\end{theorem}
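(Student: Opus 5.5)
The plan is to show that the adaptive problem converges to the oracle problem defining $h^*$, and then read off support recovery by passing expectations through. The argument is a perturbation one built on the strong convexity from Theorem~\ref{thm:stability}.

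First, track the weights. By Assumption~\ref{assum:timescale}, $\bar h^{(t)}_i \to \mathbb{E}_x[|h^*_i(x)|]$. For $i\in\mathcal{A}^*_{\text{global}}$, Assumption~\ref{assum:signal} puts this limit at $\ge c$, so by continuity of $u\mapsto u^{-\gamma}$ on $[c,\infty)$ we get $\tilde w^{(t)}_i \to \mathbb{E}_x[|h^*_i|]^{-\gamma}\le c^{-\gamma}$. For $i\notin\mathcal{A}^*_{\text{global}}$, the limit is $0$, so $\tilde w^{(t)}_i\to\infty$. Assumption~\ref{assum:neglibiblebiasterm} then lets me write $\epsilon_t:=\max_{i\in\mathcal A^*}\lambda_1\tilde w^{(t)}_i\to0$ and $M_t:=\min_{i\notin\mathcal A^*}\lambda_1\tilde w^{(t)}_i\to\infty$. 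The weights are random through the training stream, so every limit here is in probability. This is where the "probability approaching 1" in \eqref{eq: timescale identification} comes from: I would work on the event where $\epsilon_t<\eta$ and $M_t>K$, which has probability tending to one.

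Second, kill the inactive coordinates. The objective is nonnegative and $h=0$ is feasible, so $\hat h^{(t)}$ satisfies $\lambda_1\sum_i\tilde w_i\hat h_i(x)\le \tfrac12\|x\|_2^2$. Hence for $i\notin\mathcal A^*$ we get $\hat h^{(t)}_i(x)\le \|x\|_2^2/(2M_t)$. Taking expectations and using $\mathbb{E}\|x\|_2^2<\infty$ gives $\mathbb{E}_x[\hat h^{(t)}_i\mid\tilde w^{(t)}]\to0<\tau$. A sharper option is the KKT conditions: once $M_t>\|D^\top x\|_\infty$, the inactive coordinates are exactly zero pointwise, since the gradient of the smooth part at a feasible point is bounded by $\|D^\top x\|_\infty+$ terms controlled by $\|x\|$. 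Either route works.

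Third, handle the active coordinates and shrinkage. Compare $\hat h^{(t)}(x)$ with $h^*(x)$ using the monotonicity argument in the proof of Theorem~\ref{thm:lipschitz}, now perturbing the linear term instead of $x$. Both are minimizers of $\lambda_2$-strongly convex problems over $\mathbb R_+$ with the inactive part controlled by the previous step, and their objectives differ only by the linear term $\lambda_1\tilde w^\top h$ (restricted to $\mathcal A^*$, plus the inactive mass already shown to vanish). Subtracting the optimality conditions and using subdifferential monotonicity should give
\[
\|\hat h^{(t)}(x)-h^*(x)\|_2\le \frac{\sqrt{|\mathcal A^*|}\,\epsilon_t + C\|x\|_2^2/M_t}{2\lambda_2},
\]
or a similar bound. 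This tends to $0$, which yields $\hat h^{(t)}(x)\to h^*(x)$, i.e.\ no shrinkage. The bound is integrable by $\mathbb E\|x\|^2<\infty$, so dominated convergence gives $\mathbb{E}_x[\hat h^{(t)}_i]\to\mathbb{E}_x[h^*_i]\ge c>\tau$ for $i\in\mathcal A^*$. Combined with the second step, the thresholded set equals $\mathcal A^*_{\text{global}}$ on the high-probability event.

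I expect the main obstacle to be the coupling in the third step. The inactive coordinates are not exactly zero unless $M_t$ beats $\|D^\top x\|_\infty$, which is not uniform in $x$. I would handle this by splitting on $\{\|x\|\le R\}$ versus its complement, using the KKT route on the bounded part and the crude $\|x\|^2$ tail bound on the rest, then letting $R\to\infty$. A secondary subtlety is that the weights depend on $\hat h$ through the EMA. Assumption~\ref{assum:timescale} sidesteps this by stipulating convergence to the oracle marginals, so I would use it as given rather than prove a fixed point.
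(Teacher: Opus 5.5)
Your proposal is correct. Your first step and your final thresholding coincide with the paper's Steps~1 and~3. The central convergence $\hat h^{(t)}(x)\to h^*(x)$, however, you establish by a different mechanism.

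The paper argues qualitatively. The AEN objective converges pointwise to the oracle objective plus the indicator of $\{h_i=0,\ i\notin\mathcal{A}^*_{\text{global}}\}$, the minimizers stay in a compact set, and the strongly convex limit has a unique minimizer, so the minimizers converge. Then $\mathbb{E}_x\|x\|_2^2<\infty$ lets expectations pass to the limit.

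You instead prove a quantitative stability estimate. Comparison with $h=0$ gives $\|\hat h^{(t)}_{\mathcal{I}}\|_1\le\|x\|_2^2/(2M_t)$ on $\mathcal{I}:=(\mathcal{A}^*_{\text{global}})^c$. Since $\hat h^{(t)}_{\mathcal{A}^*}$ is optimal for the active subproblem with $\hat h^{(t)}_{\mathcal{I}}$ frozen, the monotonicity argument of Theorem~\ref{thm:lipschitz} applies to the linear perturbation $g=\lambda_1\tilde w^{(t)}_{\mathcal{A}^*}+D_{\mathcal{A}^*}^\top D_{\mathcal{I}}\hat h^{(t)}_{\mathcal{I}}$. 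It gives $\|\hat h^{(t)}_{\mathcal{A}^*}-h^*_{\mathcal{A}^*}\|_2\le\|g\|_2/(2\lambda_2)$, which is your bound.

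Your route buys several things:
\begin{itemize}
\item an explicit rate in $\epsilon_t$ and $1/M_t$;
\item an envelope that integrates directly, so dominated convergence is not even needed;
\item an explicit high-probability event for the randomness of $\tilde w^{(t)}$.
\end{itemize}
It also sidesteps a step the paper leaves implicit. Pointwise convergence to a limit taking the value $+\infty$ does not by itself yield convergence of minimizers. One needs the liminf/limsup inequalities of epi-convergence, which here come from nonnegativity of the penalty, $\hat h^{(t)}_{\mathcal{I}}\to0$, and evaluating at $h^*$. The paper's route is shorter and avoids the block decomposition.

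Two small remarks. First, the obstacle in your last paragraph is not real. Your bound already absorbs the nonzero inactive coordinates through the integrable term $\|D_{\mathcal{A}^*}^\top D_{\mathcal{I}}\|_2\|x\|_2^2/(2M_t)$, so no truncation on $\{\|x\|_2\le R\}$ is needed. Second, if you want exact zeros via KKT, the right threshold is $M_t\ge\max_i\|d_i\|_2\|x\|_2$, obtained from $\|x-D\hat h^{(t)}\|_2\le\|x\|_2$. The threshold $M_t>\|D^\top x\|_\infty$ is not enough, because $d_i^\top D\hat h^{(t)}$ can be negative.
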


\begin{proof}
    To establish selection consistency without the IRC, we follow several steps.
    Let $\mu^{*}_{i}:=\mathbb{E}_{x}[|h^{*}_{i}(x)|]$.
    
    \textbf{Step 1: Weight separation}.
    Fix $i\in\mathcal{A}^{*}_{\text{global}}$.
    From the signal lower bound assumption (Assumption~\ref{assum:signal}), we have $\mu^{*}_{i}>0$. The continuous mapping theorem with the timescale separation assumption (Assumption~\ref{assum:timescale}) yields
    $\tilde{w}^{(t)}_{i}=(\bar{h}^{(t)}_{i})^{-\gamma} \to (\mu^{*}_{i})^{-\gamma}\le c^{-\gamma}<\infty$ as $t\to\infty$.
    Next, fix $i\notin \mathcal{A}^{*}_{\text{global}}$. By definition, $\mu_{i}^{*}=0$ and thus $\tilde{w}_{i}^{(t)}\to\infty$. Note that these separations do not use Assumption~\ref{assum:neglibiblebiasterm}.
    
    \textbf{Step 2: Convergence of the AEN-SAE objective and shrinkage elimination.}
    The weight separations together with Assumption \ref{assum:neglibiblebiasterm} give the pointwise convergence of the AEN-SAE objective:
    \begin{align}
    \begin{split}
    \label{eq: pointwise convergence to the oracle}
        &\frac{1}{2}\|x-D h(x)\|_{2}^{2}
        +\lambda_{1} \sum_{i=1}^{d_{\text{dict}}}\tilde{w}^{(t)}_{i}h_{i}(x)+\lambda_{2}\|h(x)\|_{2}^{2}\\
        &\to
        \frac{1}{2}\|x-D h(x)\|_{2}^{2}
        +\lambda_{2}\|h(x)\|_{2}^{2}
        + 1_{h_{i}(x)=0, i\notin\mathcal{A}^{*}_{\text{global}}}.
        \end{split}
    \end{align}
    The first term of the AEN-SAE objective gives a uniform bound on the minimizers, so the minimization can be restricted to a compact subset of the feasible region.  Since the limiting objective that defines $h^{*}$ is strongly convex and therefore has a unique minimizer, the minimizers satisfy $\hat{h}^{(t)}(x)\to h^{*}(x)$, which shows the shrinkage elimination. As $\mathbb{E}_{x}[\|x\|_{2}^{2}]<\infty$, we also have $\mathbb{E}_{x}[\hat{h}^{(t)}(x)\mid \tilde{w}^{(t)}]\to \mu^{*}$.

    \textbf{Step 3: Global oracle selection.}
    Fix $i\in\mathcal{A}^{*}_{\text{global}}$.
    The convergence $\mathbb{E}_{x}[\hat{h}^{(t)}_{i}(x)\mid \tilde{w}^{(t)}]\to \mu_{i}^{*}$,
    together with the signal lower bound (Assumption \ref{assum:signal}), implies that taking a sufficiently large $t$ gives
    $\mathbb{E}_{x}[\hat{h}^{(t)}_{i}(x)\mid \tilde{w}^{(t)}]>\tau$.
    Meanwhile, fix $i\notin\mathcal{A}^{*}_{\text{global}}$.
    The convergence $\mathbb{E}_{x}[\hat{h}^{(t)}_{i}(x)\mid \tilde{w}^{(t)}]\to 0$ implies that taking a sufficiently large $t$ gives
    $\mathbb{E}_{x}[\hat{h}^{(t)}_{i}(x)\mid \tilde{w}^{(t)}]<\tau$.
    Therefore, we get \eqref{eq: timescale identification}, which completes the proof.
\end{proof}

The oracle property has implications beyond feature selection; it also improves the geometric stability of the target manifold. By allowing noise features to receive an infinitely diverging penalty, the adaptive weights effectively decouple the active signal from ambient dictionary noise, naturally suppressing the interaction leakage term of the Hoffman bound.

\begin{corollary}[Suppression of Interaction Leakage]
\label{cor:leakage}
Let $\hat{h}$ be the AEN-SAE code given the weights $w_{i}$s.
For active features $\mathcal{A}=\{i:\hat{h}_{i}\ne 0\}$,
let $\Delta h_{\mathcal{A}}$ be a local perturbation along the active manifold. If the minimum weight for inactive features satisfies $\lambda_1 \min_{i\in\mathcal{A}^{c}}w_{i} > \|D_{\mathcal{A}^c}^\top (D_{\mathcal{A}} \Delta h_{\mathcal{A}}-x)\|_\infty$, then the strict subdifferential inclusion for the inactive orthant is preserved. Consequently, the local Hoffman bound $H$ is independent of the interaction term $\|D_{\mathcal{A}^c}^\top D_{\mathcal{A}}\|_2$.
\end{corollary}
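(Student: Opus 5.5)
The argument uses only the KKT (first-order optimality) conditions of the weighted problem in Theorem~\ref{thm:lipschitz}, read off coordinatewise on the inactive set. I will treat the weights as fixed and use the nonnegative formulation, so that $|h_i|=h_i$. For $i\in\mathcal{A}^c$ the optimality condition, as in the proof of Theorem~\ref{thm:lipschitz}, reads
\[
D_{\mathcal{A}^c}^\top(D_{\mathcal{A}}h_{\mathcal{A}}-x) + \lambda_1 s_{\mathcal{A}^c}=0,
\]
where $s_i\in(-\infty, w_i]$ is a subgradient of $w_i h_i+\delta_{h_i\ge 0}$ at $h_i=0$. The $\ell_2$ term contributes nothing on these coordinates because $\hat h_i=0$ there.

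\textbf{Step 1: strict inclusion.} The strict version of the inclusion requires
\[
-D_{\mathcal{A}^c}^\top(D_{\mathcal{A}}h_{\mathcal{A}}-x)\in \lambda_1\,\mathrm{int}\bigl(\partial\bigr),
\]
which in the two-sided reading is
\[
|D_{\mathcal{A}^c}^\top(D_{\mathcal{A}} h_{\mathcal{A}}-x)|_i<\lambda_1 w_i \quad\text{for every } i\in\mathcal{A}^c .
\]
I will apply this at the perturbed point $h_{\mathcal{A}}$ (written $\Delta h_{\mathcal{A}}$ in the statement). Bounding each coordinate by the $\ell_\infty$ norm and each $w_i$ below by $\min_{i\in\mathcal{A}^c}w_i$, the hypothesis of the corollary gives exactly this strict inequality. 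So every inactive coordinate stays strictly inside its subdifferential along the perturbation.

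\textbf{Step 2: the active set is locally constant.} Strict inclusion is an open condition and the KKT map is continuous in $h_{\mathcal{A}}$ and $x$. Hence on a neighbourhood no inactive coordinate can enter the support; this is the standard strict-complementarity argument. The solution map therefore restricts to the single polyhedral piece where $h_{\mathcal{A}^c}\equiv 0$. On that piece it is the solution of the reduced problem $\mathcal{L}_{\mathcal{A}}$ from the proof of Theorem~\ref{thm:stability}:
\[
h_{\mathcal{A}} = (D_{\mathcal{A}}^\top D_{\mathcal{A}}+2\lambda_2 I)^{-1}\bigl(D_{\mathcal{A}}^\top x-\lambda_1 W_{\mathcal{A}}\mathbf{1}\bigr).
\]

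\textbf{Step 3: the Hoffman bound.} The local error bound is governed only by the active equality constraints. The inactive inequality constraints are all slack, so by Robinson's upper-Lipschitz result for polyhedral multifunctions they do not enter the local constant. Consequently $H$ depends only on $\|(D_{\mathcal{A}}^\top D_{\mathcal{A}}+2\lambda_2 I)^{-1}\|_2$, which Theorem~\ref{thm:stability} bounds by $1/(2\lambda_2)$. The cross-Gram block $D_{\mathcal{A}^c}^\top D_{\mathcal{A}}$ appears nowhere in this expression. The heuristic $\mathcal{O}(\|D_{\mathcal{A}^c}^\top D_{\mathcal{A}}\|_2/\lambda_{\min})$ scaling stated earlier arises only when inactive constraints can become active, and that is excluded here.

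The main obstacle is Step 3: making precise that the Hoffman constant of the polyhedral system collapses to that of the active block. I would handle it by writing the KKT system as a polyhedral multifunction. I would then invoke Robinson's theorem that its local modulus is the maximum over the pieces active near the point, and by Step 2 only the one piece with fixed support remains.
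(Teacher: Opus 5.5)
Your proposal is correct and follows essentially the same route as the paper. You write the inactive-coordinate KKT inequality, use the hypothesis (via the $\ell_\infty$ bound and $\min_{i\in\mathcal{A}^c}w_i$) to keep it satisfied under the perturbation so that $h_{\mathcal{A}^c}$ stays clamped at zero, and conclude that local stability is governed only by the active block $D_{\mathcal{A}}^\top D_{\mathcal{A}}+2\lambda_2 I$ of Theorem~\ref{thm:stability}; your Steps~2--3 make rigorous what the paper asserts in one line, and the explicit locally affine formula from Step~2 already gives the local modulus, so the appeal to Robinson's theorem is not needed.
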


\begin{proof}
We begin with the first order optimality condition to the AEN-SAE objective:
\begin{align}
-D_i^\top (x - D\hat{h}) + 2\lambda_2 \hat{h}_i + \lambda_1 w_i \cdot \text{sign}(\hat{h}_i) = 0 \quad &\text{for } \hat{h}_i \neq 0, \label{eq:kkt1_ideal} \\
|D_i^\top (D\hat{h} - x)| \le \lambda_1 w_i \quad &\text{for } \hat{h}_i = 0. \label{eq:kkt2_ideal}
\end{align}

A local perturbation $\Delta h_{\mathcal{A}}$ in the active set shifts the gradient residual for the inactive set by $D_{\mathcal{A}^c}^\top (D_{\mathcal{A}} \Delta h_{\mathcal{A}}-x)$. For the polyhedral topology to remain stable (i.e., no inactive feature improperly enters the support), the first order optimality condition for the inactive orthant \eqref{eq:kkt2_ideal} 
must continue to hold under this perturbation. This requires the subgradient barrier to absorb the induced gradient leakage:
\begin{equation*}
    \|D_{\mathcal{A}^c}^\top (D_{\mathcal{A}} \Delta h_{\mathcal{A}}-x)\|_\infty \le \lambda_1 w_{i \in \mathcal{A}^c}.
\end{equation*}
Under this condition, the inactive variables $h_{\mathcal{A}^c}$ are thus clamped to zero. Therefore the local polyhedral stability is decoupled from the interaction leakage $\|D_{\mathcal{A}^c}^\top D_{\mathcal{A}}\|_2$, and is instead controlled by the active curvature established in Theorem~\ref{thm:stability}, which completes the proof.
\end{proof}

Note that in a standard $\ell_1$ SAE, the subgradient barrier for inactive features is uniformly $\lambda_1$, meaning any cross-correlation leakage exceeding $\lambda_1$ causes a discontinuous boundary shift, driving $H \to \infty$. In the AEN-SAE, however, the clipped adaptive penalty $\text{clip}((\bar{h}^{(t)}_{i})^{-\gamma},w_{\min},w_{\max})$ for inactive features diverges toward $w_{\max}$; see the proof of Theorem \ref{thm:oracle}. So choosing $\lambda_{1}w_{\max}$ sufficiently large guarantees the stated condition.

Corollary~\ref{cor:leakage} completes our analysis of the Hoffman bound in Section~\ref{subsec:amortized-lasso}. While the overcomplete dictionary $D$ is inherently highly collinear (which would normally cause the numerator $\|D_{\mathcal{A}^c}^\top D_{\mathcal{A}}\|_2$ to diverge), the AEN-SAE's adaptive weights neutralize its impact. Combined with Theorem~\ref{thm:stability}, which bounds the denominator via $\lambda_2$, the AEN-SAE aims to solve the root causes of polyhedral optimization pathologies.

\subsection{Numerical Stability and Implementation Details}
\label{subsec:stability-mechanisms}

While Theorems \ref{thm:lipschitz} and \ref{thm:oracle} establish the stability and asymptotic consistency of the idealized AEN-SAE, deploying this algorithm at the scale of LLMs ($d_{\text{dict}} > 10^5$) introduces several numerical and optimization constraints. To translate the idealized mathematical oracle into the robust, highly scalable implementation used in our experiments, we introduce three necessary engineering modifications:

\paragraph{1. Scale invariance from reference features.} The idealized weight $\tilde{w}_i = (\bar{h}_i)^{-\gamma}$ is highly sensitive to the raw magnitude of the LLM residual stream, which varies drastically across different models and intermediate layers. To make the architecture transferable without exhaustive hyperparameter retuning, we introduce a scale-invariant reference $\bar{h}_{\text{ref}}^{(t)}$, defined as the mean activity of the top $K\%$ features. The penalty is calculated relative to this active cohort: $w_i^{(t)} \propto (\bar{h}_{\text{ref}}^{(t)} / \bar{h}_i^{(t)})^\gamma$. This ensures the relative penalty distribution remains identical regardless of whether the layer's activations are bounded by $10^0$ or $10^2$.

\paragraph{2. Numerical bounds ($w_{\min}$ and $w_{\max}$).} In theory, we rely on the idealized limit $\tilde{w}_i \to \infty$. In 16-bit (BF16) or 32-bit floating-point hardware, this unbounded growth causes numerical instability and gradient explosion. We therefore clamp the maximum penalty to $w_{\max}$. As long as $\lambda_1 w_{\max}$ is sufficiently larger than the local gradient residual, the KKT inequality holds, and the noise feature remains safely dead. Conversely, we clamp the minimum penalty to $w_{\min}$ (e.g., $0.1$) to prevent underflow, recognizing that in practice, shrinkage is not strictly zero, but safely bounded to a negligible $\lambda_1 w_{\min}$.

\paragraph{3. The cold-start warmup schedule.} The asymptotic proof naturally assumes the system has already reached the steady-state expectation $\mathbb{E}[|h_i^*|]$. However, during the initial steps of neural network training, feature activations are driven by random initialization noise. If an ultimately useful feature initializes poorly, its EMA drops, its weight spikes to $w_{\max}$, and it is permanently suppressed before the encoder has the opportunity to route data to it. We mitigate this problem by interpolating the adaptive weights with a delayed linear warmup schedule $\rho(t) \in [0, 1]$. This effectively enforces a uniform $\ell_1$ penalty until the dictionary has sufficiently aligned with the data manifold, satisfying the timescale separation requirement (Assumption~\ref{assum:timescale}) before the adaptive penalties take effect.

\section{Experimental Setup and Full Training Details}
\label{app:setup}

\paragraph{Data pipeline and activation extraction.} For LLM-based experiments, we stream text from a deduplicated version of the Pile~\citep{gao2020pile800gbdatasetdiverse}. Tokens are packed contiguously into fixed-length sequences without padding to maximize throughput. Residual-stream activations are extracted from a frozen LLM via a forward hook at a chosen layer, and flattened into token-level batches.

To ensure consistent scaling across models, each activation vector $x \in \mathbb{R}^{d_{\text{model}}}$ is $\ell_2$-normalized per token by rescaling to $\|x\|_2 = \sqrt{d_{\text{model}}}$, yielding approximately unit variance per coordinate.

A circular buffer is used to accumulate activations into fixed-size SAE minibatches, improving hardware utilization and throughput.

\paragraph{Training Procedure.} At each training step, the SAE reconstructs activations and optimizes the reconstruction loss with regularization using Adam at a fixed learning rate. Gradients are clipped to prevent instability and decoder columns are renormalized to unit $\ell_2$ norm after each update.

Encoder and decoder weights are initialized using Kaiming uniform initialization. The decoder bias is initialized to zero and the decoder dictionary is explicitly normalized both at initialization and after each optimization step.

For adaptive-weight models, EMA statistics are updated at every step, and the adaptive reweighting mechanism is activated only after a warmup period. This ensures stable initialization before adaptive penalties are applied.

\paragraph{Baselines and compute normalization.} We include a TopK baseline without auxiliary resampling or proxy-gradient mechanisms. This serves two purposes: (i) to normalize computational cost across methods, and (ii) to isolate whether AEN-SAEs resolve feature starvation without relying on heuristic interventions. No auxiliary resampling or proxy-gradient methods are used for any model, including AEN-SAEs.

\paragraph{Metrics and evaluation.} We log reconstruction and sparsity metrics, including mean squared error (MSE), explained variance, $\ell_0$ sparsity, and the $\ell_1/\ell_0$ ratio. Feature starvation is quantified via dead neuron statistics, including windowed dead-feature percentage and recovery rates. We additionally track shrinkage ratios and feature-utilization summaries.

To evaluate geometric structure, we compute dictionary diagnostics such as mutual coherence, condition numbers of active Gram matrices, and interaction leakage measures. Feature-utilization metrics include nromalized entropy, Gini coefficient, and mass concentration across feature subsets.

For LLM-based experiments, we assess downstream fidelity by patching SAE reconstructions into the frozen model and measuring cross-entropy (CE) degradation, CE recovery relative to a batch-mean ablation baseline, and KL divergence between output distributions.

\paragraph{Validation protocol.} Validation is performed in two regimes. First, an online validation stream loops over a held-out dataset shard and evaluates a subset of documents at each logging step to track training dynamics. Second, a final validation pass is conducted over a non-looping held-out shard, which is exhausted once to compute aggregate metrics reported in tables.

\section{Extended Discussion of Metrics}
\label{sec:extended-methodology}

In this appendix, we describe each metrics logged in detail. Some of these are inspired by classical high-dimensional statistics and may offer novel ways to measure SAE health. Throughout, let $x \in \mathbb{R}^d$ denote an input activation, $\hat{x}$ its SAE reconstruction, $h \in \mathbb{R}^{d_{\text{dict}}}$ the latent code, and  $D \in \mathbb{R}^{d_{\text{model}} \times d_{\text{dict}}}$ the decoder dictionary with columns $d_j$. Let $\mathcal{A}$ denote an active feature set, $\mathcal{A}^c$ its complement, and $\epsilon > 0$ a small numerical constant.

For all scalar metrics, we report the mean, standard deviation, and the 10th, 50th, and 90th percentiles. Where relevant, we additionally track the minimum and maximum values.

\subsection{Reconstruction and Sparsity}

The most natural ways of measuring reconstruction performance are the mean-squared error and explained variance
\begin{align*}
    \text{MSE} &= \mathbb{E}\!\left[\|x - \hat{x}\|_2^2\right], \\
    \text{Explained variance} &= 1 - \frac{\sum_{j=1}^{d_{\text{model}}}\mathrm{Var}(x_j - \hat{x}_j)}{\sum_{j=1}^{d_{\text{model}}}\mathrm{Var}(x_j)}.
\end{align*}

For an SAE, the goal is for the internal representation to be sparse. Hence, we measure the average sparsity by the number of active features:
\begin{equation*}
    \text{$\ell_0$ active features} = \mathbb{E}\!\left[\sum_{i=1}^{d_{\text{dict}}}\mathbf{1}\{|h_i| > \epsilon\}\right].
\end{equation*}
The effect of shrinkage bias can be measured with the ratio of reconstructed to original $\ell_1$ magnitude, or the $\ell_1$ magnitude per active feature. Respectively:
\begin{align*}
    \text{Feature shrinkage ratio} &= \frac{\mathbb{E}\!\left[\|\hat{x}\|_1\right]}{\mathbb{E}\!\left[\|x\|_1\right]}, \\
    \text{$\ell_1/\ell_0$ ratio} &= \mathbb{E}\!\left[\frac{\|h\|_1}{\max(\|h\|_0,1)}\right],
\end{align*}
where a feature shrinkage ratio of 1 means the magnitude of the original signal has been preserved.

\subsection{Dead Neuron Metrics}

A central focus of our study is on the existence and recovery of dead neurons. Let $m_i$ be the maximum absolute activation of feature $i$ over a fixed window of training steps. We define the dead neuron percentage as:
\begin{equation*}
    \text{Dead neurons pct} = 100 \cdot \frac{1}{d_{\text{dict}}} \sum_{i=1}^{d_{\text{dict}}}\mathbf{1}\{m_i \le \epsilon\}.
\end{equation*}
Given two consecutive windows with maxima $m_i^{\text{prev}}$ and $m_i^{\text{curr}}$, the recovery rate is:
\begin{equation*}
    \text{Dead neuron recovery rate} = \frac{\sum_i \mathbf{1}\{m_i^{\text{prev}} \le \epsilon\}\mathbf{1}\{m_i^{\text{curr}} > \epsilon\}}{\sum_i \mathbf{1}\{m_i^{\text{prev}} \le \epsilon\}},
\end{equation*}
with the convention that the value is 0 when the denominator is 0. The window size is set to 10,000 steps in our experiments. This leads to between ten million and fifty million tokens seen per window, allowing rare features to fire.

For LLM batches, we additionally log the percentage of features that never activate within the batch---though this tends to be a noisy metrics as rare semantic features are often marked as dead.
\begin{equation*}
    \text{Dead neurons batch pct} = 100 \cdot \frac{1}{d_{\text{dict}}} \sum_{i=1}^{d_{\text{dict}}}\mathbf{1}\!\left\{\max_b |h_{b,i}| \le \epsilon\right\}.
\end{equation*}

\subsection{Adaptive-Weight Metrics}

For adaptive LASSO and adaptive elastic net SAEs, measuring the concentration of the weights is useful to observe degeneracy. Using the same notation as in Section~\ref{sec:aen}, the activation effective sample size (ESS) is:
\begin{equation*}
    \text{ESS} = \frac{\left(\sum_i a_i\right)^2}{\sum_i a_i^2}, \qquad a_i = \max(\bar{h}_i, 0).
\end{equation*}
We also report summary statistics of the adaptive weights, together with the fractions pinned near the bounds, which can be used to adjust the weight clipping in the case of degeneracy:
\begin{align*}
    \text{Pinned max weight pct} &= 100 \cdot \frac{1}{d_{\text{dict}}}\sum_i \mathbf{1}\{w_i \le w_{\min} + \varepsilon\}, \\
    \text{Pinned min weight pct} &= 100 \cdot \frac{1}{d_{\text{dict}}}\sum_i \mathbf{1}\{w_i \ge w_{\max} - \varepsilon\},
\end{align*}

\subsection{Feature-Utilization Metrics}

We argue that the distribution of firing features is a proxy for the health of an SAE. Let $r_i$ be the per-feature firing rate, estimated either from the EMA firing rate or from batch frequencies. The normalized entropy is:
\begin{equation*}
    \text{Normalized entropy} = \frac{-\sum_i p_i \log p_i}{\log d_{\text{dict}}}, \qquad p_i = \frac{r_i}{\sum_j r_j}.
\end{equation*}
The concentration of features firing can also be measured. The top-$n$\% feature-firing mass is
\begin{equation*}
    \text{Top-$n$\% mass pct} = 100 \cdot \sum_{i \in \mathrm{top}\,1\%} p_i.
\end{equation*}
The Gini coefficent, as a measure of computing inequality in firing rates, is computed from the sorted rates $r_{(1)} \le \cdots \le r_{(n)}$ as:
\begin{equation*}
    \text{Gini} = \frac{2\sum_{i=1}^n i\,r_{(i)}}{n\sum_i r_i} - \frac{n+1}{n}.
\end{equation*}

\subsection{Dictionary Geometry Metrics}

Let $\mathcal{A}$ be the active set for a given batch, with decoder blocks $D_{\mathcal{A}}$ and $D_{\mathcal{A}^c}$. Since the Frobenius norm upper bounds the spectral norm ($\| D\|_2 \le \| D\|_F$), a proxy for the interaction leakage, which is one of the factors controlling the stability of LASSO solution maps, can be measured as 
\begin{equation*}
    \text{Frobenius interaction} = \|D_{\mathcal{A}^c}^\top D_{\mathcal{A}}\|_F.
\end{equation*}
The other factor in stability is the curvature of the active Gram matrix is $G_{\mathcal{A}} = D_{\mathcal{A}}^\top D_{\mathcal{A}}$, from which we report:
\begin{align*}
    \text{Active min eigenvalue} &= \lambda_{\min}(G_{\mathcal{A}}),\\ \text{Active max eigenvalue} &= \lambda_{\max}(G_{\mathcal{A}}),\\ \text{Active condition number} &= \frac{\lambda_{\max}(G_{\mathcal{A}})}{\max(\lambda_{\min}(G_{\mathcal{A}}),\epsilon)}.
\end{align*}
Although metrics for the full Gram matrix $G = D^\top D$ will be intractable for large models, they are not prohibitive in the active space since it aims to be sparse with $|\mathcal{A}|$ on the scale of a few hundred features at most.

As demonstrated, dictionary coherence can affect the choice of appropriate SAE architecture greatly, as predicted in the classical theory of LASSO~\citep{wainwright2019highdimensional}. Let $\tilde d_j = d_j / \|d_j\|_2$. The nearest-neighbour coherence of feature $j$ is $c_j = \max_{k \ne j} \left|\tilde d_j^\top \tilde d_k\right|$. We report:
\begin{align*}
    \text{Max mean cosine similarity} &= \frac{1}{d_{\text{dict}}}\sum_j c_j,\\ \text{Dictionary coherence nearest-neighbour max} &= \max_j c_j,\\ \text{Dictionary coherence max} &= \max_{j \ne k} \left|\tilde d_j^\top \tilde d_k\right|,\\ \text{Dictionary coherence mean} &= \frac{1}{d_{\text{dict}}(d_{\text{dict}}-1)} \sum_{j \ne k}\left|\tilde d_j^\top \tilde d_k\right|.
\end{align*}

\subsection{LLM Downstream-Fidelity Metrics}

For validation on frozen language models, we evaluate next-token cross-entropy under three forward-pass conditions: the original uninterrupted model (\textit{clean}), the model with SAE reconstructions patched into the residual stream (\textit{patched}), and an ablated baseline (\textit{baseline}).

Let $z$ denote logits and $t$ denote tokens. The next-token cross-entropy is:
\begin{equation*}
    \mathrm{CE}(z,t) = -\frac{1}{N}\sum_{n=1}^{N}\log p_z(t_{n+1}\mid t_{\le n}).
\end{equation*}

We use a batch-mean ablation for our baseline logits ($z_{\mathrm{baseline}}$) rather than standard zero-ablation. As noted in recent mechanistic interpretability literature, zero-ablating intermediate residual streams in modern architectures pushes the network off the valid activation manifold~\citep{nanda2022glossary}. Specifically, substituting zero vectors can destabilize rotary position embeddings (RoPE)~\citep{su2023rope} and induce uncalibrated out-of-distribution shifts that artificially inflate the apparent impact of the ablation~\citep{pochinkov2024investigating}. Mean ablation mitigates this geometric instability by keeping the baseline interventions closer to the true data distribution~\citep{li2024optimal}. While it constitutes a strictly stronger, more challenging baseline---as the mean activation inherently reconstructs high-frequency, common features and lacks only fine-grained semantic nuance---it provides a much cleaner, causally rigorous isolation of the SAE's ability to recover token-specific representations.

The batch-level metrics are:
\begin{align*}
    \text{Clean CE} &= \mathrm{CE}(z_{\mathrm{clean}}, t),\\ 
    \text{Patched CE} &= \mathrm{CE}(z_{\mathrm{patched}}, t),\\ 
    \text{Baseline CE} &= \mathrm{CE}(z_{\mathrm{baseline}}, t),\\ \text{CE degradation} &= \mathrm{patched\_loss} - \mathrm{clean\_loss},\\ \text{CE recovered} &= 1 - \frac{\text{Patched CE} - \text{Clean CE}}{\text{Baseline CE} - \text{Clean CE}},\\ 
\end{align*}
We also compute the KL divergence between clean and patched logits:
\begin{equation*}
   \mathrm{KL}(p_{\mathrm{clean}} \,\|\, p_{\mathrm{patched}}) = \sum_v p_{\mathrm{clean}}(v) \log\frac{p_{\mathrm{clean}}(v)}{p_{\mathrm{patched}}(v)}.
\end{equation*}
In implementation, logits are sanitized to finite values before computing the softmax and KL. A set of final test metrics are reported for all metrics on a corpus of held-out documents.

\subsection{Training-Health and Throughput Metrics}

Finally, we log gradient norms and update ratios for the global network, as well as decomposed at the encoder/decoder level. This is used for validating the timescale separation assumption.
\begin{align*}
    \text{Global grad norm} &= \|\nabla\theta\|_2, \\
    \text{Encoder grad norm} &= \|\nabla\theta_{\text{enc}}\|_2, \\ 
    \text{Decoder grad norm} &= \|\nabla\theta_{\text{dec}}\|_2,
\end{align*}
The normalized update magnitudes are thus:
\begin{align*}
    \text{Encoder update ratio} &= \eta \frac{\|\nabla \theta_{\mathrm{enc}}\|_2}{\|\theta_{\mathrm{enc}}\|_2}, \\
    \text{Decoder update ratio} &= \eta \frac{\|\nabla \theta_{\mathrm{dec}}\|_2}{\|\theta_{\mathrm{dec}}\|_2}.
\end{align*}
We also profile FLOPs per step and cumulatively in order to compare the computational cost of different architectures.

\section{Extended Spike Model Analysis}
\label{app:spiked}

\subsection{Data Generation and Training Protocol}

To study the effect of dictionary coherence in a controlled setting, we consider a spiked covariance model following~\citet{wainwright2019highdimensional}. Each dictionary atom $d_j \in \mathbb{R}^{d_{\text{model}}}$ is constructed as a mixture of an independent component $u_j$ and a shared spike $v$:
\begin{equation*}
    d_j = \frac{\sqrt{1-\rho}\,u_j + \sqrt{\rho}\,v}{\|\sqrt{1-\rho}\,u_j + \sqrt{\rho}\,v\|_2},
\end{equation*}
where $u_j \sim \mathcal{N}(0, I)$ are independent directions and $v \sim \mathcal{N}(0, I)$ is shared across all atoms. The parameter $\rho \in [0,1]$ controls the expected pairwise coherence between atoms, interpolating between independent dictionaries ($\rho=0$) and highly collinear ones ($\rho \to 1$).

For each coherence level $\rho \in \{0, 0.3, 0.5, 0.7, 0.9\}$, we construct a fixed teacher dictionary $D^* \in \mathbb{R}^{d_{\text{model}} \times d_{\text{dict}}}$ with $d_{\text{model}}=256$ and $d_{\text{dict}}=1024$ (a $4\times$ overcomplete representation).

Synthetic data is generated by sampling sparse codes $h^* \in \mathbb{R}^{d_{\text{dict}}}$ with exactly $k=16$ nonzero entries. Active indices are drawn uniformly without replacement, and nonzero magnitudes are sampled independently from $\mathrm{Uniform}[1,3]$. Observations are then formed as $x^* = D^* h^*.$

All SAE variants are trained on the synthetic data using Adam for $50{,}000$ steps with batch size $256$ and learning rate $10^{-3}$. Hyperparameters are swept to target specific sparsity levels, and performance is evaluated in terms of reconstruction accuracy, sparsity, and feature utilization.

\subsection{Low-Coherence Regime}

When $\rho=0$, dictionary atoms are approximately orthogonal, and the sparse recovery problem is well-conditioned. In this regime, all architectures perform comparably in terms of reconstruction, and feature starvation is largely absent for continuous relaxation methods.

Hard-masking methods such as TopK perform well because feature selection is trivial in the absence of correlation. AEN-SAE maintains full dictionary utilization (no dead features) while incurring a modest reduction in explained variance due to the $\ell_2$ regularization term.

\subsection{High-Coherence Regime}

As coherence increases, the geometry of the sparse recovery problem deteriorates. At $\rho=0.9$, atoms become highly collinear, violating the irrepresentability condition and inducing instability in $\ell_1$-based methods.

\paragraph{Continuous Relaxations.} Standard $\ell_1$-regularized SAEs exhibit strong sensitivity to hyperparameters, oscillating between dense, high-shrinkage solutions and degenerate sparse representations. Adaptive LASSO, while theoretically appealing, becomes numerically unstable in this regime; in our experiments, the majority of hyperparameter configurations collapse to degenerate solutions ($\ell_0 < 2$), which is consistent with known limitations of $\ell_1$ methods under high collinearity.

\paragraph{Hard-Masking Methods.} TopK exhibits extremely severe feature starvation in the high-coherence regime. Because feature selection is based on raw activation magnitude, highly correlated features compete, which leads to repeated selection of a small subset of dominant atoms. As a result, over $95\%$ of dictionary elements become permanently inactive across a range of sparsity levels.

\paragraph{AEN-SAE Behavior.} AEN-SAE mitigates the effects of hard-masking methods by combining adaptive reweighting with an $\ell_2$ regularization term. The $\ell_2$ term stabilizes the curvature of the optimization landscape, while adaptive weights suppress redundant features and encourage diversification of the active set.

Empirically, AEN reduces dead features to approximately $40\%$ at typical sparsity levels. While feature starvation is not completely eliminated, the reduction is substantial compared to hard-masking methods. Additionally, AEN achieves improved geometric conditioning, reducing the active-set condition number (e.g., $\kappa \approx 21$ compared to $\kappa \approx 769$ for standard elastic net) and lowering interaction leakage between active and inactive features.

\subsection{Discussion}

Our results highlight a fundamental limitation of $\ell_1$-based sparse coding under coherence: as dictionary atoms become increasingly correlated, the associated solution map becomes unstable and difficult to approximate with shallow encoders. Even in this controlled setting, feature starvation remains significant, suggesting that it is not purely an optimization artifact but a structural consequence of coherence.

AEN-SAE provides a principled mechanism for mitigating these effects by directly controlling the geometric factors governing stability. While it does not fully eliminate feature starvation, it substantially improves feature utilization and robustness without relying on auxiliary heuristics such as resampling or proxy gradients.

\section{Extended Analysis on Pythia 70M}
\label{app:pythia}

\subsection{Experimental Setup}

We evaluate SAE variants on activations extracted from layer 3 of a frozen Pythia 70M model~\citep{biderman2023pythia}. The residual stream has dimension $d_{\text{model}} = 512$ and we train an $8\times$ overcomplete dictionary with $d_{\text{dict}} = 4096$.

Training data is drawn from a deduplicated version of the Pile~\citep{gao2020pile800gbdatasetdiverse}, with sequence length 128 and per-token $\ell_2$ normalization applied to activations. Optimization proceeds using Adam for 25,000 steps (approximately $204.8$M tokens), with batch size 8,192 and learning rate $10^{-3}$. AEN-SAEs employ a 4,000-step warmup for adaptive weights. Validation is performed on a held-out shard of 5,000 documents.

\subsection{Hyperparameter Tuning and Timescale Separation}

AEN-SAEs introduce several hyperparameters, including the $\ell_2$ regularization strength $\lambda_2$, the adaptive weighting exponent $\gamma$, the EMA momentum $\beta$, weight bounds $[w_{\min}, w_{\max}]$, and the reference percentile $p$. A full hyperparameter sweep is provided in the next section, Appendix~\ref{sec:hyperparam-sweep-pythia}.

In practice, these parameters exhibit strong stability across runs. Following classical adaptive elastic net protocol, we set $\gamma \in \{0.5, 1.0\}$ and choose $\lambda_2 \leq \lambda_1$, typically setting it to a small value and treating it as a perturbation for structural $\ell_2$ regularization. Adaptive weights are clipped to $[0.01, 10.0]$ for numerical stability.

A key theoretical requirement is timescale separation between the EMA buffer and dictionary updates. With $\beta \in \{0.999, 0.9999\}$, the EMA update rate $(1 - \beta)$ is $10^{-3}$--$10^{-4}$, corresponding to a memory horizon spanning tens of millions of tokens. Empirically, we observe that parameter updates occur at magnitude $\mathcal{O}(10^{-7})$ per step, which implies that the activation statistics evolve several orders of magnitude faster than the dictionary. This satisfies the timescale separation assumption required for stable adaptive weighting.

To initialize the EMA buffer, we couple the warmup duration to the EMA horizon via $T_{\text{warmup}} \approx (1 - \beta)^{-1}$, ensuring that adaptive weights are only applied after sufficient signal accumulation. Finally, because our adaptive weights are scaled relative to the top $p=0.05$ percentile of features, the entire AEN-SAE architecture is naturally scale-invariant; good hyperparameter choices should generalize to large models. Overall, AEN-SAEs introduce zero meaningful computational overhead---measuring less than a 0.1\% increase in FLOPs per step compared to a purely TopK implementation---and only require tuning the base penalty $\lambda_1$ to hit target sparsity bottlenecks just as in vanilla SAEs.

\subsection{Hyperparameter Sweep for Pythia 70M}
\label{sec:hyperparam-sweep-pythia}

For a robust comparison, we conducted hyperparameter sweeps for all SAE architectures. The complete grid of hyperparameters evaluated in our Pythia 70M experiments is detailed in Table~\ref{tab:pythia-hyperparameter-sweep}.

For our adaptive architectures (adaptive LASSO and AEN-SAE), several structural hyperparameters were held constant across runs to isolate the effect of the sparsity penalty. The bounds on the adaptive weights were fixed to $[w_{\min}, w_{\max}] = [0.01, 10.0]$ to ensure numerical stability without causing gradient explosion. To prevent premature feature death during the noisy initial phase of optimization, the adaptive weighting mechanism utilized a delayed warmup schedule: the effective penalty was held strictly uniform ($\rho(t) = 0$) for the first 4,000 training steps, followed by a linear interpolation to full adaptive weighting ($\rho(t) = 1$) over the subsequent 2,000 steps.

\begin{table}[htbp]
\centering
\scriptsize
\caption{\textbf{Pythia 70M hyperparameter sweep grid.} The combinatorial sweep of hyperparameter configurations evaluated for each sparse autoencoder architecture.}
\label{tab:pythia-hyperparameter-sweep}
\begin{tabular}{ll}
\toprule
\textbf{Architecture} & \textbf{Hyperparameter Grid} \\
\midrule
\textbf{Top-K} & $K \in \{8, 16, 32, 64, 128, 256\}$ \\
\midrule
\textbf{Vanilla $\ell_1$} & $\lambda_1 \in \{0.0005, 0.001, 0.005, 0.01, 0.05\}$ \\
\midrule
\textbf{Elastic Net} & $\lambda_1 \in \{0.001, 0.005, 0.01\}$ \\
& $\lambda_2 \in \{0.0001, 0.0005, 0.001, 0.005\}$ \\
\midrule
\textbf{Adaptive LASSO} & $\lambda_1 \in \{0.001, 0.005, 0.01\}$ \\
& $\gamma \in \{0.5, 1.0\}$ \\
& $\beta \in \{0.999, 0.9999\}$ \\
& $p \in \{0.005, 0.01, 0.05\}$ \\
\midrule
\textbf{AEN-SAE} & $\lambda_1 \in \{0.001, 0.005, 0.01\}$ \\
& $\lambda_2 \in \{0.0001, 0.0005, 0.001, 0.005\}$ \\
& $\gamma \in \{0.5, 1.0\}$ \\
& $\beta \in \{0.999, 0.9999\}$ \\
& $p \in \{0.005, 0.01, 0.05\}$ \\
\bottomrule
\end{tabular}
\end{table}

\subsection{Coherence-Induced Failure Modes}

Unlike synthetic sparse coding problems, LLM activations exhibit strong contextual correlations, inducing a highly coherent dictionary as characterized by high mutual cosine similarity between atoms~\citep{zou2006adaptivelasso, fan2008sureindependencescreeningultrahigh}. This violates standard sparse recovery assumptions such as the irrepresentability condition and leads to predictable failure modes across SAE architectures.

\paragraph{Vanilla $\ell_1$.} To suppress correlated noise features, the $\ell_1$ penalty must be increased, which induces significant shrinkage bias. At $\ell_0 \approx 22$, we observe magnitude suppression of approximately $25\%$, which degrades both reconstruction fidelity and downstream performance. The method is also highly sensitive to hyperparameter choice, reflecting the instability of $\ell_1$ regularization under collinearity.

\paragraph{Elastic net.} The addition of an $\ell_2$ penalty stabilizes optimization but leads to dense and poorly conditioned solutions. In our experiments, elastic net produces representations with $\ell_0 > 200$ and extremely high condition numbers (median $\kappa > 10^{13}$), indicating that it fails to meaningfully separate features in coherent regimes.

\paragraph{Adaptive LASSO.} Although adaptive LASSO benefits from strong theoretical guarantees in low-coherence settings, it becomes unstable in this regime due to the absence of a structural curvature term. Across our sweeps, a large fraction of configurations collapse to degenerate solutions, failing to capture meaningful structure. This behavior is consistent with known limitations of $\ell_1$-based methods under strong feature correlations \cite{zou2009adaptiveelasticnet}; see also Theorem \ref{thm:lipschitz} and the subsequent discussion.

\subsection{Reconstruction--Utilization Trade-off}

Figure~\ref{fig:pythia70m-pareto} and Table~\ref{tab:pythia_aen_vs_topk} demonstrate a consistent trade-off between reconstruction accuracy and feature utilization.

TopK achieves higher explained variance across sparsity levels (e.g., $0.76$ at $\ell_0 \approx 32$), but it does so by concentrating activation mass into a small number of highly redundant features. This phenomenon is reflected in high maximum coherence ($\approx 0.99$) and moderate levels of feature starvation.

In contrast, AEN-SAEs distribute activation mass across correlated feature groups. While this leads to a modest reduction in explained variance, it significantly improves feature utilization, reduces redundancy, and produces a more balanced dictionary.

\subsection{Feature Utilization and Activation Concentration}

At relaxed sparsity levels ($\ell_0 \approx 128$), the differences between methods become more evident. TopK concentrates approximately $86\%$ of activation mass in the top $10\%$ of features, yielding a high Gini coefficient ($0.903$) and persistent feature starvation. AEN-SAEs mitigate this concentration, reducing the Gini coefficient to $0.670$ and increasing entropy, while lowering dead features to approximately $3\%$. This reflects a more uniform utilization of the dictionary.

\subsection{Mechanism: Adaptive Reweighting}

The improved utilization arises from the adaptive weighting mechanism. Frequently activated features accumulate larger EMA values, increasing their effective penalty and preventing them from dominating the representation. Conversely, underutilized features receive relatively lower penalties, allowing them to become active. This induces a self-balancing effect that redistributes activation mass away from redundant hubs and toward underutilized features, which improves coverage of the representation space without requiring explicit resampling or auxiliary losses.

\subsection{Discussion}

Overall, our results demonstrate that coherence is a primary driver of SAE failure in real-world settings. Hard-masking methods achieve strong reconstruction performance by collapsing onto a small subset of features, while AEN-SAEs provide a principled mechanism for improving feature utilization and geometric stability with minimal computational overhead and no auxiliary heuristics.





\end{document}